%% file: main.tex
\documentclass{article}

\usepackage[preprint]{corl_2026}

\usepackage{graphicx}
\usepackage{subcaption}
\usepackage{booktabs}

\usepackage{algorithm}
\usepackage{algpseudocode}
\usepackage{float}
\usepackage{amsmath}
\usepackage{amssymb}
\usepackage{mathtools}
\usepackage{amsthm}
\usepackage{enumitem}

\theoremstyle{plain}
\newtheorem{theorem}{Theorem}[section]

\theoremstyle{definition}

\theoremstyle{remark}

\title{ALOE: Action-Level Off-Policy Evaluation for Vision-Language-Action Model Post-Training}

\author{%
\textbf{Rushuai Yang}\textsuperscript{1,2,*} \quad
\textbf{Hecheng Wang}\textsuperscript{1,3,*} \quad
\textbf{Zhichao Wu}\textsuperscript{1,4,*} \quad
\textbf{Chiming Liu}\textsuperscript{1,*,\textdagger} \quad
\\
\textbf{Xiaohan Yan}\textsuperscript{1} \quad
\textbf{Xuan Du}\textsuperscript{1} \quad
\textbf{Shuoyu Yue}\textsuperscript{1} \quad
\textbf{Chuheng Zhang}\textsuperscript{5} \quad
\textbf{Yunlong Wang}\textsuperscript{1} \quad
\\
\textbf{Yongcheng Liu}\textsuperscript{1} \quad
\textbf{Lizhe Qi}\textsuperscript{3} \quad
\textbf{Yi Chen}\textsuperscript{2} \quad
\textbf{Wei Shan}\textsuperscript{1} \quad
\textbf{Maoqing Yao}\textsuperscript{1}
\\
\textsuperscript{1}AgiBot \quad
\textsuperscript{2}The Hong Kong University of Science and Technology \quad
\textsuperscript{3}Fudan University \quad
\\
\textsuperscript{4}Nanjing University \quad
\textsuperscript{5}Independent Researcher \quad
\textsuperscript{*}Equal Contribution \quad
\textsuperscript{\textdagger}Corresponding Author
}

\begin{document}
\maketitle

% \vspace{0.3\baselineskip}
\input{sec/abstract}
\keywords{Reinforcement Learning; Vision-Language-Action models;}
\input{sec/introduction}
\input{sec/related_work}

\input{sec/preliminaries}
\input{sec/method}

\input{sec/experiments}

\input{sec/conclusion}

\bibliography{references}

\clearpage
% \appendix
\input{sec/appendix}

\end{document}

%% file: sec/abstract.tex
\begin{abstract}
    We study how to improve large foundation vision-language-action (VLA) systems through human-in-the-loop reinforcement learning (RL) in real-world environments. A key challenge is learning reliable value functions from heterogeneous real-world experience, as value estimation provides the primary learning signal for VLA training. In practice, replay buffers contain trajectories collected from historical policies, online rollouts, demonstrations, and intermittent human interventions. Because replay buffers mix trajectories generated by different behaviors, the observed returns can be mismatched with the quality of the current policy. Prior VLA post-training methods often rely on progress-style value signals, which reflect the average quality of historical behaviors, leading to mismatched learning signals for the current policy. In this paper, we propose ALOE, an off-policy evaluation framework whose value function directly evaluates current-policy behavior for each iteration. Specifically, ALOE combines chunked temporal-difference bootstrapping and conservative value aggregation to perform stable current-policy evaluation, then uses these estimates for advantage-weighted policy improvement. This design improves credit assignment to critical action chunks under sparse rewards and supports stable policy improvement. We evaluate ALOE on four real-world manipulation tasks encompassing long-horizon and high-precision scenarios: smartphone packing, laundry folding, multi-object sorting, and phone assembly. Across all tasks, ALOE outperforms other VLA post-training methods, highlighting the benefit of off-policy value estimates for real-world VLA post-training. Videos are available at our project website \href{https://rooshy-yang.github.io/aloe/\#}{https://rooshy-yang.github.io/aloe}.
    
    % We study how to perform iterative reinforcement learning (RL) post-training for large foundation vision-language-action (VLA) systems in real-world settings. This iterative approach provides a highly flexible and practical paradigm for real-world implementation. However, a major challenge arises in reliably improving the current policy with offline-collected data, particularly when it consists of diverse sources like autonomous rollouts from historical policies and human interventions. Guiding the actor through such mixed data requires accurate evaluation, making stable and reliable critic learning the core problem. Prior works conservatively evaluates the mean performance from mixture datasets, which can bias the actor and limit learning effectiveness. In this paper, we propose ALOE, an action-chunk level off-policy evaluation framework that learns a chunk-based Q-function. This design enables the critic to accurately identify critical action chunks in sparse reward scenarios, providing more reliable guidance for policy improvement.
    % We evaluate our method on four real-world manipulation tasks, including both high-precision and long-horizon tasks. Empirical results show that ALOE can reliably assess the contribution of individual action chunks, achieving faster improvements in task success rates compared to prior baselines. Videos and additional materials are available at our project website \href{https://rooshy-yang.github.io/aloe/\#}{https://rooshy-yang.github.io/aloe}.
    \end{abstract}
    \vspace{-1\baselineskip}
    

%% file: sec/introduction.tex
\section{Introduction}
\label{sec:introduction}
Vision-language-action (VLA) models have emerged as a promising paradigm for robotic manipulation, enabling policies to condition actions on both real-world visual observations and natural language instructions~\cite{agibotworldcontributors2025agibotworldcolosseolargescale,pi0,pi05,yang2025magma,zhao2025cot,lin2024vila,szot2025multimodal,li2024manipllm}. Recent progress has shown that combining imitation learning with reinforcement learning (RL) can substantially improve task performance beyond supervised learning~\cite{recap,rldg,janny2025reasoning,hoeller2024anymal}. A key factor behind these improvements is the value function, which evaluates behavior quality and provides optimization signals for VLA policies. However, accurately estimating value functions in real-world VLA systems remains particularly challenging~\cite{recap,Ma2025GVL}. Because real-world interaction is costly, VLA RL must reuse heterogeneous replay data, turning policy evaluation into an off-policy problem. Naively applying off-policy value learning can suffer from error accumulation under distributional uncertainty~\cite{kumar2019stabilizing,deadlytriad}. To preserve stability for high-capacity VLA systems, prior real-world methods often use conservative on-policy or state-value style evaluation~\cite{recap,awr}. These approaches can be robust, but the learned value function often reflects an implicit mixture behavior distribution rather than directly evaluating the current policy's individual actions. While this conservative design improves training stability, it relies on a strong assumption: the behavior distribution of the evolving policy remains close to the mixture distribution represented in replay. This assumption is fragile when human interventions, policy updates, and environment variability continuously shift the data distribution. If the current VLA policy learns new recovery behaviors that were absent from historical data, a coarse state-value or progress-style critic may underestimate these actions and provide misleading optimization signals. This induces a trade-off between stability and learning efficiency in real-world VLA training, motivating the following question:
\begin{center}
    \textit{How to learn a reliable value function with mixed real-world replay for flow-matching VLA?}
\end{center}
In this work, we introduce ALOE, an action-level off-policy evaluation framework for real-world post-training of large flow-based VLA policies. ALOE learns $Q(s,a)$ with temporal-difference bootstrapping so value information can be stitched across fragmented replay. It uses Q-chunking to accelerate sparse-reward credit propagation over long horizons, and pessimistic ensemble aggregation to reduce overestimation on uncertain actions. The learned critic then guides an advantage-weighted update for the flow-based actor. This design allows VLA policies to use fine-grained action-level value information while preserving the stability required for real-world robot training. We summarize our contributions as follows:
\begin{itemize}[leftmargin=0.8em]
\item We propose an action-level off-policy evaluation framework for real-world VLA post-training under heterogeneous replay from online rollouts, demonstrations, and human interventions.
\item We combine Q-chunked TD learning and pessimistic critic aggregation to enable stable long-horizon action-value learning for large flow-based VLA policies.
\item We evaluate ALOE on four real-world robotic manipulation tasks and provide ablations and critic diagnostics showing that the main gains come from action-level off-policy critic design.
\end{itemize}

\begin{figure}[t]
    \centering
    \includegraphics[width=\linewidth]{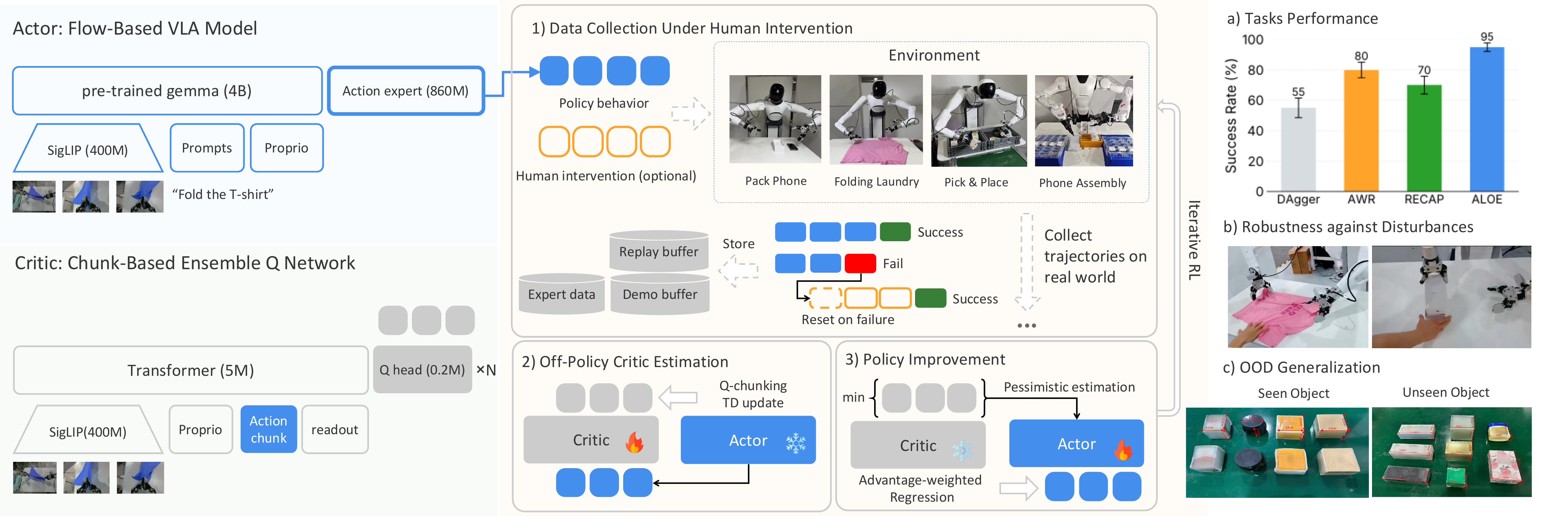}
    \caption{\textbf{Overview of ALOE for real-world VLA post-training.} \textit{Left}: ALOE pairs a flow-based VLA actor with a chunk-based ensemble critic that scores action sequences. \textit{Middle}: Training iterates between human-in-the-loop data collection, off-policy critic learning, and advantage-weighted policy improvement. \textit{Right}: Evaluation covers task success, robustness to disturbances, and zero-shot generalization to unseen objects.}
    \label{fig:teaser}
\end{figure}
\vspace{-1\baselineskip}

%% file: sec/related_work.tex
\section{Related Work}

\subsection{Foundations of Policy Evaluation}
Policy evaluation estimates the expected long-term outcome of actions or policies and is commonly categorized along two axes: on-policy vs.\ off-policy estimation, and Monte Carlo (MC) targets vs.\ temporal-difference (TD) bootstrapping targets~\cite{sutton1998reinforcement}.
On-policy methods estimate returns under the data-collecting policy, which improves stability but limits data reuse~\cite{dppo,ppo,Ren2025DPPO}.
Off-policy methods estimate the value of actions or target policies from data collected by other behavior policies, improving sample efficiency but introducing distributional uncertainty~\cite{levine2016end,kalashnikov2018qt,sacflow,reinflow}.
MC targets rely on observed trajectory returns and are sensitive to sparse rewards and trajectory fragmentation, while TD targets bootstrap locally and can propagate value across disconnected transitions~\cite{sutton1998reinforcement,TD,kumar2020cql,dabney2018qrdqn}.
ALOE builds on these classical tools, but the contribution is their controlled use for action-level evaluation of a large flow-based VLA under complex real-world manipulation tasks than the individual ingredients themselves.

\subsection{Reinforcement Learning for Large VLA Policies}
Recent work has explored scaling RL to large VLA models on real robotic systems~\cite{Tan2025InteractivePostTraining,Lu2025VLARL,Liu2025WhatCanRLBringVLA,Chen2025piRL,Li2025SimpleVLA_RL}.
On-policy methods such as PPO, DPPO, and REINFORCE prioritize stability by evaluating the current rollout distribution~\cite{williams1992simple,Ren2025DPPO,ppo,dppo}, but this limits reuse of costly real-world data.
Another line uses trajectory-level preference, progress, or state-value modeling to reweight imitation-style updates~\cite{recap,Zhai2025VLAC,Ghasemipour2025SelfImprovingEFM,Ma2023LIV,Ma2025GVL}.
These methods, including AWR and $\pi_{0.6}$-style training (RECAP)~\cite{awr,recap}, are closely related to our actor update, but their value signal is primarily state-level or progress-style rather than a current-policy action-value critic. Action-sequence and action-value methods for VLA have also been studied in offline or restricted settings~\cite{Huang2025CORFT,Frans2025DiffusionGuidance,kuba2023advantage}.
CO-RFT and DEAS-style methods show that chunked or action-sequence value learning is useful for VLA fine-tuning, but they focus mainly on offline data with no online rollout or different training regimes.
Residual and data-generation approaches such as RoboFuME and PLD-style self-improvement use smaller Gaussian or residual policies to guide data collection or policy refinement~\cite{Yang2024RoboFuME,xiao2025selfimprovingvisionlanguageactionmodelsdata}.
Simulation-oriented methods such as SimpleVLA-RL scale RL with simulator support~\cite{Li2025SimpleVLA_RL}.
In contrast, ALOE directly improves the main large flow-based VLA policy in a real-world online loop with human interventions, safety-driven truncations, and mixed-policy replay.

%% file: sec/preliminaries.tex
\begin{figure*}[t]
    \centering
    \includegraphics[width=\linewidth]{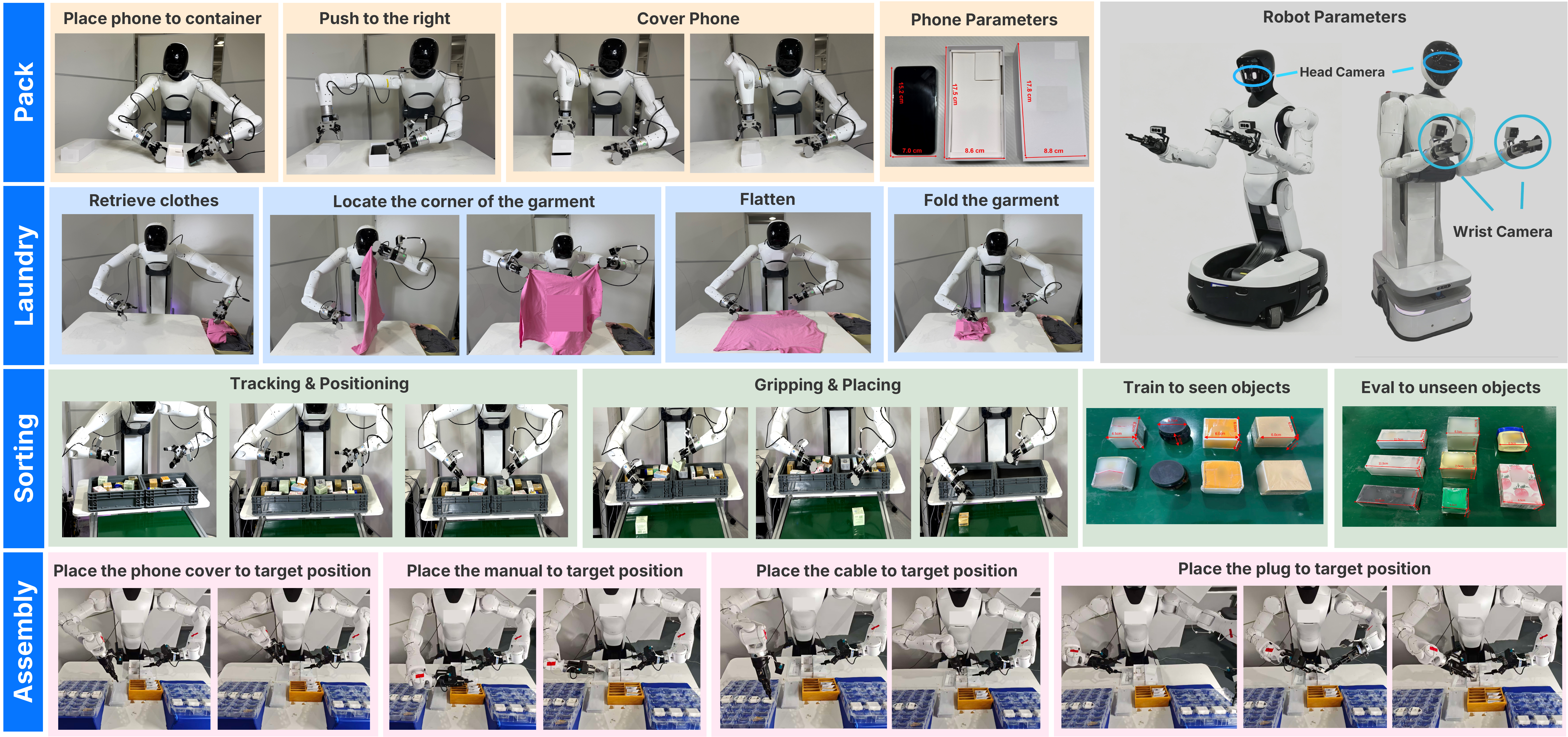}
    \caption{\textbf{Illustrations of our real-world manipulation tasks (phone packing, laundry folding, multi-object sorting, and high-precision phone assembly) and the two robotic setups used in the experiments.} From left to right, each task visualization depicts the sequence of subtasks that must be completed sequentially.}
    \label{fig:exp_plot}
\end{figure*}
\section{Preliminaries}
\label{sec:preliminaries}

\subsection{Real-World VLA Post-Training Pipeline}

We study post-training of an instruction-conditioned VLA policy $\pi_\theta(a_t\mid s_t, \ell)$, where $s_t$ is the robot observation, $\ell$ is the language instruction, and $a_t$ is the action.
The policy is rolled out on the task. Each episode is labeled with a task outcome that determines the reward, and human interventions may be provided to correct early-iteration failures.
Unlike simulated RL, real-world VLA post-training cannot rely on large-scale on-policy rollouts due to slow interaction, costly resets, and limited parallelism.
Training data is therefore accumulated in a replay buffer $\mathcal{D}$ containing transitions $(s_t,a_t,r_t,s_{t+1},d_t,\ell)$ from previous policies, 
demonstrations, autonomous rollouts, and human interventions. A value function is then trained on the accumulated replay data and used to guide policy improvement.
\subsection{Value Estimation: Data Policy vs. Current Policy}
\label{sec:value_estimation}
For high-dimensional visual inputs, using episode return as the value target often leads to high variance estimates, especially in real-world RL.
Prior VLA post-training methods therefore commonly learn a value or progress model as the policy-improvement signal.
Real-world VLA data is collected through repeated rollouts of an evolving policy with safety-driven truncations and human interventions.
The replay buffer $\mathcal{D}$ therefore mixes online rollouts, historical policies, demonstrations, and human corrections, including fragments in which different segments of one task attempt may come from different policies or teleoperation.
Let $\pi_{\mathcal{D}}$ denote the implicit data policy induced by this buffer, i.e., the conditional action distribution represented by replay actions rather than a single executable checkpoint.
For example, an MC estimator learns a state value by fitting the observed return:
\begin{equation}
V^{\pi_{\mathcal{D}}}
\in \operatorname*{arg\,min}_{V}
\mathbb{E}_{\mathcal{D}}
\left[
\left(V(s_t, \ell)-\sum_{k=0}^{T-t}\gamma^k r_{t+k}\right)^2
\right],
\end{equation}
while a SARSA-style critic bootstraps from the next replay action,
\begin{equation}
Q^{\pi_{\mathcal{D}}}
\in \operatorname*{arg\,min}_{Q}
\mathbb{E}_{\mathcal{D}}
\left[
\left(Q(s_t,a_t, \ell)-\left(r_t + \gamma(1-d_t)Q(s_{t+1},a_{t+1},\ell)\right)\right)^2
\right].
\end{equation}
Thus, $\pi_{\mathcal{D}}$ may combine old checkpoints, human teleoperation segments, and corrected recovery trajectories that do not correspond to one executable VLA policy that can be rolled out.
Values learned from replay returns or replay next-actions may therefore describe progress under $\pi_{\mathcal{D}}$, not the current policy being optimized. Using such values for policy improvement can therefore provide misaligned learning signals, especially in real-world settings. This mismatch can overestimate actions that are not reliably executable by the current policy and weaken subsequent policy improvement. To evaluate the current policy from mixed replay, we can use a TD-style off-policy objective.
It still uses the observed transition, but bootstraps from the next action sampled from the current policy:
\begin{equation}
\label{eq:q_def}
Q^\pi
\in \operatorname*{arg\,min}_{Q}
\mathbb{E}_{\mathcal{D}}
\left[
\left(Q(s_t,a_t, \ell)-\left(r_t+\gamma(1-d_t)
\mathbb{E}_{a'\sim\pi(\cdot \mid s_{t+1}, \ell)}
\left[Q(s_{t+1},a',\ell)\right]\right)\right)^2
\right].
\end{equation}
Unlike SARSA, which bootstraps from the next replay action $a_{t+1}$, this target evaluates actions followed by the current policy. However, off-policy TD learning combines the ingredients of the deadly triad: bootstrapping, off-policy data, and function approximation, which can destabilize learning~\cite{deadlytriad}.
This issue is especially serious in real-world VLA replay, where sparse rewards slow credit propagation and many current-policy actions are weakly covered by the replay buffer.
ALOE adopts this view and stabilizes it with Q-chunking and pessimistic aggregation in Sec.~\ref{sec:method_critic}.

%% file: sec/method.tex
\section{Method}
\label{sec:method}

As shown in Fig.~\ref{fig:teaser}, ALOE targets real-world VLA post-training with fragmented, policy-mixed replay from human-in-the-loop data collection. It learns a current-policy off-policy critic with Q-chunking and pessimistic aggregation (Sec.~\ref{sec:method_critic}), then uses the critic for advantage-weighted policy improvement of a flow-based VLA actor (Sec.~\ref{sec:method_actor}).

\subsection{Stable Off-Policy Critic Learning}
\label{sec:method_critic}

\begin{figure*}[t]
    \centering
    \includegraphics[width=\linewidth]{figs/Q_value2_cropped.png}
    \caption{\textbf{ALOE provides fine-grained action-level credit assignment.} \textbf{Left:} on the Phone Packing task, ALOE identifies recovery behaviors, where the policy transitions from failed attempts (red) to successful execution (green). ALOE also captures subtle action adjustments when trying to insert the phone case. \textbf{Right:} on the Laundry Folding task, ALOE detects negative progress and assigns lower values to failure-prone behaviors such as unsuccessful flattening or grasping.}
    \label{fig:q_value_visualization}
\end{figure*}
\vspace{-1mm}
\begin{figure*}[b]
    \centering
    \includegraphics[width=\linewidth]{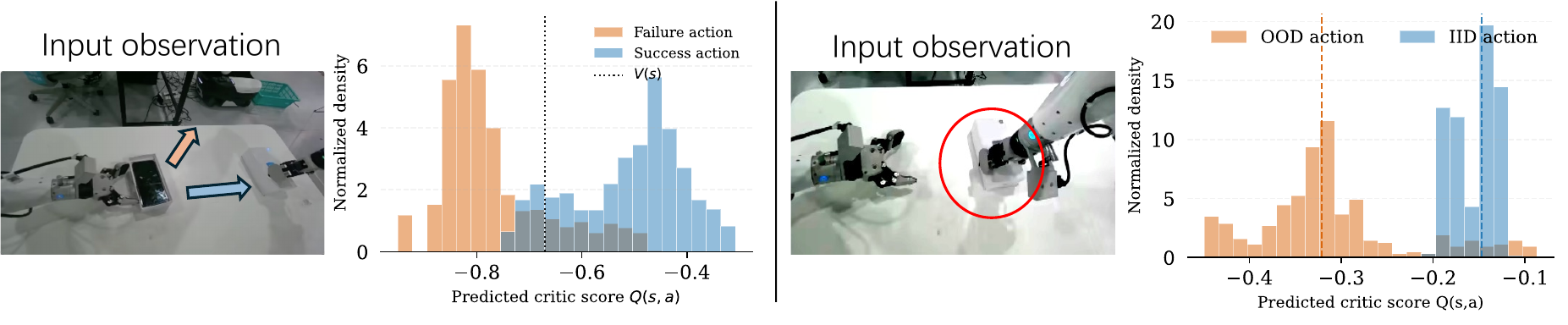}\caption{\textbf{Critic diagnostics.} We visualize the predicted Q-score distributions to examine whether the critic can (i) distinguish good and bad actions under similar states and (ii) behave conservatively on OOD actions. \textbf{Left:} actions leading to successful trajectories receive consistently higher scores than those leading to failures. \textbf{Right:} unseen OOD actions (e.g., random shaking behaviors by purpose) are assigned lower scores than actions that lead to precise insertion. These results demonstrate both fine-grained action ranking and conservative value estimation under distribution shift.}
    \label{fig:q_value_all}
\end{figure*}

We first describe the policy evaluation component of our method.
ALOE instantiates the current-policy off-policy evaluation objective from Sec.~\ref{sec:value_estimation}.
Rather than using replay returns or replay next-actions to evaluate the implicit data policy $\pi_{\mathcal{D}}$, ALOE bootstraps through actions sampled from the current actor.
This allows value information to propagate across fragmented transitions while keeping the critic aligned with the policy being optimized.
However, standard one-step TD propagates value slowly under sparse rewards.
To accelerate credit propagation, we adopt Q-chunking~\cite{q-chunking}, which backs up value over multi-step action chunks while preserving the current-policy bootstrap.
Specifically, we define a chunked action sequence
\(
\mathbf{a}_{t:t+h} \triangleq (a_t, a_{t+1}, \dots, a_{t+h-1}),\) and learn a chunked action-value function
$Q^\pi(s_t, \mathbf{a}_{t:t+h}, \ell)$
that evaluates the return of executing this exact action sequence from state $s_t$ under instruction $\ell$. For a transition segment
$(s_t, \mathbf{a}_{t:t+h}, r_{t:t+h-1}, s_{t+h}, d_{t+h-1}, \ell) \sim \mathcal{D}$,
the Q-chunking Bellman target is defined as
\begin{equation}
y_t^{(h)} =\sum_{k=0}^{h-1} \gamma^k r_{t+k}+\gamma^h(1-d_{t+h-1})
\mathbb{E}_{\mathbf{a}' \sim \pi_\theta(\cdot \mid s_{t+h}, \ell)} \big[Q^\pi_{\phi^-}(s_{t+h}, \mathbf{a}', \ell)\big],
\label{eq:q_chunk_target}
\end{equation}
where $\mathbf{a}' = \mathbf{a}_{t+h:t+2h}$ denotes the next action chunk sampled from the current policy.
Unlike standard $n$-step TD, this backup uses the executed action chunk for intermediate rewards, accelerating value propagation while preserving the current-policy bootstrap~\cite{q-chunking}. In regions of the state-action space with insufficient observations, we find that traditional TD-based methods may extrapolate aggressively and produce overestimated action values. This effect is amplified in safety-critical manipulation tasks, where exploration is constrained and data diversity is inherently limited. To mitigate the overestimation arising from uncertainty, we employ a pessimistic ensemble of $K$ action-value functions $\{Q^\pi_{\phi_i}\}_{i=1}^K$. Each critic is trained using the same chunked TD target in
Eq.~\eqref{eq:q_chunk_target},
by minimizing the objective:
\vspace{-2mm}
\begin{equation}
\mathcal{L}_{\mathrm{critic}}
=
\frac{1}{K}\sum_{i=1}^K
\mathbb{E}_{\mathcal{D}}
\left[
\big(
Q^\pi_{\phi_i}(s_t, \mathbf{a}_{t:t+h}, \ell) - y^{(h)}_{t}
\big)^2
\right],
\label{eq:critic_loss}
\end{equation}
where $y^{(h)}_{t}$ denotes the chunked TD target.
At policy update time, we form a conservative estimate as $
Q^\pi_{\mathrm{pess}}(s_t, \mathbf{a}_{t:t+h}, \ell)
=\min_{i} Q^\pi_{\phi_i}(s_t, \mathbf{a}_{t:t+h}, \ell)$,
which approximates a lower confidence bound on the true action value \cite{whysopessimistic, lockwood2022review,tamar2015cvar,chow2017risk}. This aggregation evaluates the current policy while providing more sensitive identification of uncertainty, improving robustness under real-world deployment.

% \label{sec:experiments}
\subsection{Advantage-Weighted Policy Improvement for VLA Policies}
\label{sec:method_actor}

We now describe how the learned action-value function is used to update the VLA policy. Although the critic is learned off-policy, our value evaluation integrates naturally with the constrained policy update strategy \cite{awr,awac}, treating the critic as a relative preference signal. This preserves constraint updates and ensures stable policy improvement across online iterations.
Given a transition, we evaluate the quality of the data action relative to the current policy $\pi$ by the advantage $
A^\pi(s_t, \mathbf{a}_{t:t+h}, \ell) = Q^\pi_{\mathrm{pess}}(s_t, \mathbf{a}_{t:t+h}, \ell) - \mathrm{sg}(V^\pi(s_t, \ell))$ with $V^\pi(s_t, \ell) = \mathbb{E}_{\mathbf{a}_{t:t+h}' \sim \pi_\theta(\cdot \mid s_t, \ell)}[Q^\pi_{\mathrm{pess}}(s_t, \mathbf{a}_{t:t+h}',\ell)]$
is estimated by sampling action chunks from the current policy, and $\mathrm{sg}$ prevents actor gradients from flowing through the critic baseline.
This advantage compares replay actions against the current policy at the same state and instruction. To emphasize relative action quality while preserving stability, we transform it into a non-negative weight using a clipped exponential,
\vspace{-0.5mm}
\begin{equation}
w(s_t, \mathbf{a}_{t:t+h}, \ell)
=
\exp\!\left(
\mathrm{clip}\!\left(
A^\pi(s_t, \mathbf{a}_{t:t+h}, \ell) / \beta,
-\epsilon_{\mathrm{clip}},
\epsilon_{\mathrm{clip}}
\right)
\right),
\label{eq:aw_weight}
\end{equation}
where $\beta$ controls the sharpness of the weighting
and $\epsilon_{\mathrm{clip}}$ limits the maximum influence of any single transition,
analogous to trust-region clipping \cite{schulman2017proximal}.
The policy is then updated by maximizing the weighted log-likelihood
over the entire replay buffer, $\mathcal{L}_{\mathrm{actor}}=
\mathbb{E}_{\mathcal{D}}
\left[
w(s_t, \mathbf{a}_{t:t+h}, \ell)\,
\log \pi_\theta(\mathbf{a}_{t:t+h} \mid s_t, \ell)
\right].$
This actor update is data-support constrained: it only increases the likelihood of actions observed in replay, with larger weights assigned to actions that the current critic ranks above the policy's average behavior.
The clipped exponential weight further limits the influence of any single transition, which stabilizes optimization for large flow-based VLA models.

\vspace{-1mm}
\begin{figure*}[h]
    \centering
    \includegraphics[width=\linewidth]{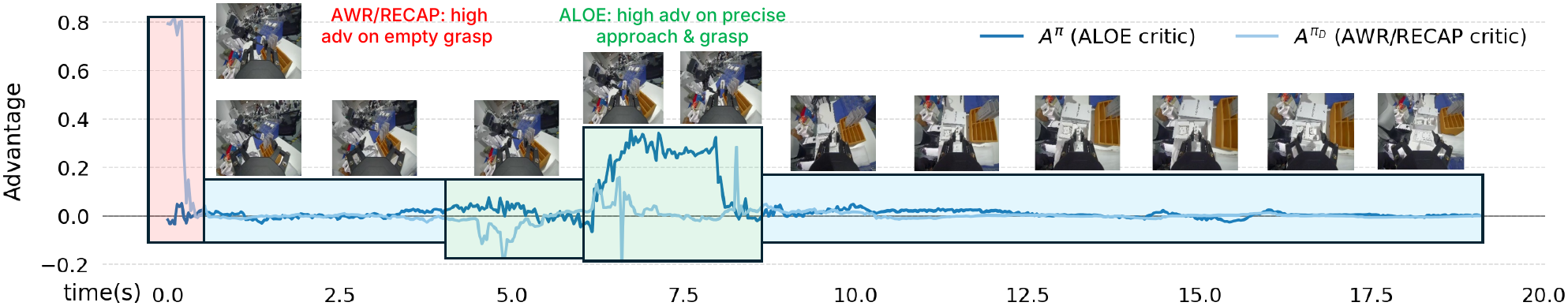}
    \caption{\textbf{Advantage comparison on Phone Assembly task.} ALOE assigns high advantage to task-relevant approach-and-grasp actions (green region), whereas AWR/RECAP can misattribute advantage to meaningless empty-grasp states (red region).}
    \label{fig:adv_compare}
\end{figure*}
\vspace{-4mm}

\subsection{Implementation Details}
\label{sec:detailed}
We instantiate ALOE on the $\pi_{0.5}$ flow-matching VLA backbone~\cite{pi05} and fine-tune the actor end-to-end during RL.
The critic uses the same visual observations and proprioception as the actor, with a Transformer-based multimodal encoder and an ensemble of Q readout heads for pessimistic aggregation.
Additional architecture, reward, and optimization details are provided in Appendix~\ref{app:training}. Our actor parameterizes continuous action chunks via flow matching, so the continuous log-likelihood term
$\log \pi_\theta(\mathbf{a}_{t:t+h}\mid s_t, \ell)$ in $\mathcal{L}_{\mathrm{actor}}$, is not tractable to evaluate exactly.
Following prior flow/diffusion-style VLA training practice~\cite{recap}, we optimize a tractable surrogate by treating the negative flow-matching objective as a proxy for the continuous log-likelihood, i.e.,
\vspace{-1mm}
\begin{align}
\label{eq:flow_actor}
\mathcal{L}_{\mathrm{aloe}}
&=\mathbb{E}_{\mathcal{D}}
\Big[
w(s_t, \mathbf{a}_{t:t+h}, \ell)
\bigl\lVert
\boldsymbol{\epsilon}-\mathbf{a}_{t:t+h}
- f_\theta\!\bigl(\tilde{\mathbf{a}}_{t:t+h}, s_t, \ell\bigr)
\bigr\rVert_2^2
\Big]
\end{align}
where $(s_t, \mathbf{a}_{t:t+h}, \ell)\sim\mathcal{D},
\eta\sim\mathcal{U}[0,1], \epsilon\sim\mathcal{N}(\mathbf{0},\mathbf{I}),$ the noised action is defined as
$\tilde{\mathbf{a}}_{t:t+h}=\eta\mathbf{a}_{t:t+h}+(1-\eta)\boldsymbol{\epsilon}$. Intuitively, maximizing the weighted policy log likelihood corresponds to minimizing weighted flow-matching loss for the action chunks under the same inputs.

%% file: sec/experiments.tex
\section{Experiments}
We evaluate ALOE on real-world manipulation tasks designed to stress long-horizon reasoning, precise action selection, robustness, and learning from human-in-the-loop replay.
The central experimental questions are: (i) whether action-level off-policy evaluation improves flow-based VLA post-training, (ii) whether the gains come primarily from the critic-side design, and (iii) whether the learned critic reliably scores current-policy and out-of-distribution actions.
Fig.~\ref{fig:q_value_visualization} provides a qualitative example of the learned critic along representative target-policy trajectories.

\subsection{Experimental Setup}

\paragraph{Tasks and Baselines.}
We evaluate ALOE on four representative real-world robotic manipulation tasks shown in Fig.~\ref{fig:exp_plot}: Pack Smart Phone, Folding Laundry, Product Sorting, and Phone Assembly. Our main comparison uses baselines available across all tasks under the same $\pi_{0.5}$ flow-based VLA backbone: DAgger and AWR. DAgger does behavior cloning with success-only data from same replay buffer without explicit value estimation. AWR~\cite{awr} uses MC state-value estimation $V^{\pi_D}(s)$ from replay and extracts the policy the same as ALOE, serving as the state-value baseline. We additionally report RECAP~\cite{recap} on the assembly task at supported robot platform. RECAP also uses MC state-value estimation $V^{\pi_D}(s)$ but using classifier-free-guidance (CFG) for actor extraction. Detailed task parameters, reward definitions, and data collection protocols are provided in Appendix.

\subsection{Overall Policy Performance}

\begin{figure*}[t]
    \centering
    \includegraphics[width=\linewidth]{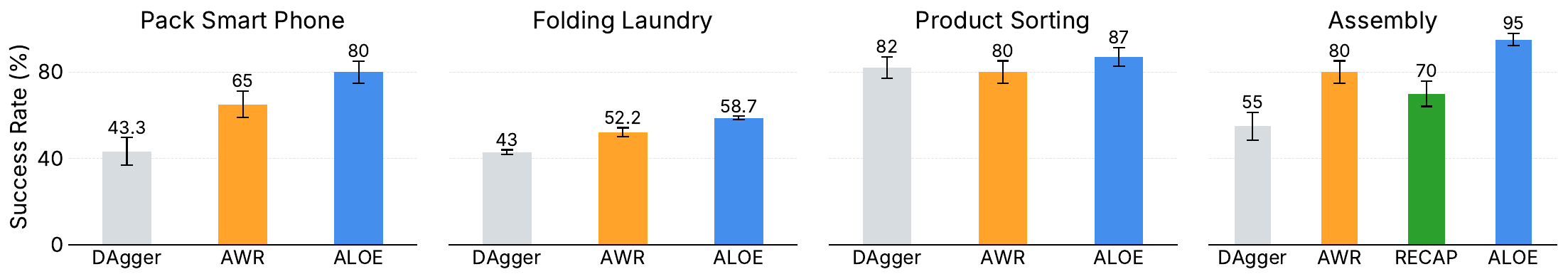}
\caption{\textbf{Average success rates on four manipulation tasks.} Each final policy is evaluated over 60 trials per task; ALOE achieves the highest final success rate across all tasks.}
    \label{fig:all_results}
\end{figure*}

\begin{figure*}[t]
    \centering
    \includegraphics[width=\linewidth]{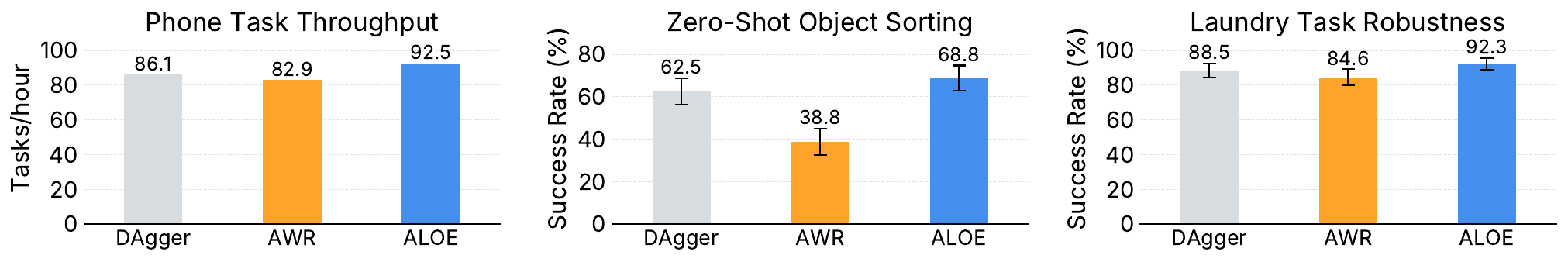}
\caption{\textbf{Efficiency, zero-shot generalization, and robustness.}
Left: phone-packing throughput is measured as the number of successful task completions per hour.
Middle: zero-shot generalization is evaluated on product-sorting objects with sizes and shapes unseen during training. Right: robustness is evaluated by applying external disturbances during laundry-folding execution and measuring recovery success.}
    \label{fig:throughput_generalization}
\end{figure*}

\begin{figure*}[t]
    \centering
    \includegraphics[width=\linewidth]{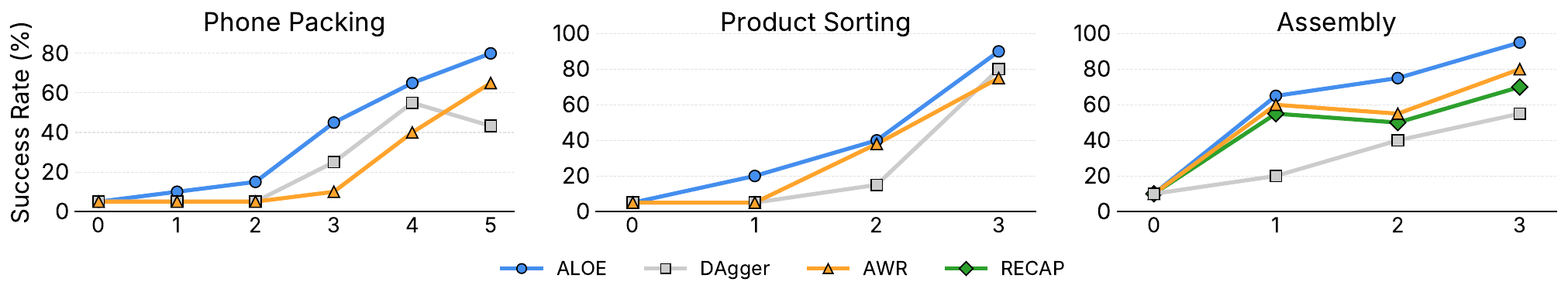}
    \caption{\textbf{Policy improvement across RL iterations.} $i$ on X axis refers to the policy iteration steps and $0$ is the BC warm-up using small human demonstration. Each iteration collects a fixed amount of online data.}
    \label{fig:policy_improvement}
\end{figure*}
\vspace{-1mm}

\begin{figure*}[tp]
\centering
\includegraphics[width=\linewidth]{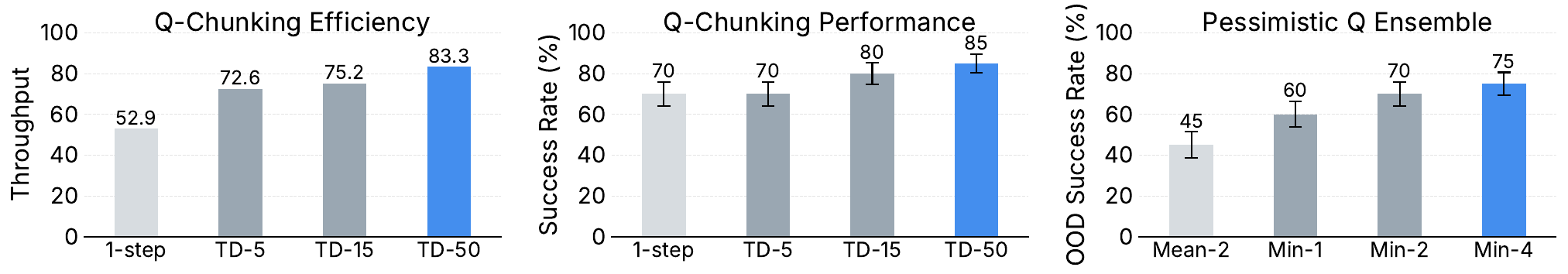}
\caption{\textbf{Critic component analysis.} Left and middle: TD-$X$ denotes the chunk size used during TD learning. Larger chunk sizes improve both task throughput and success rate on the Phone Packing task. Right: Mean/Min-$X$ denotes the ensemble size and whether minimum-value aggregation is used during policy extraction. Pessimistic aggregation achieves higher OOD success rates on unseen objects in the Product Sorting task, indicating improved robustness under distribution shift.}
\label{fig:critic_components}
\end{figure*}
\vspace{-1mm}
We first evaluate whether ALOE improves the final policy. Fig.~\ref{fig:all_results} shows that ALOE achieves the best final success rate on all four tasks. Compared with AWR, ALOE yields the largest gain on the high-precision Phone task, indicating $Q^\pi$ performs better than $V^{\pi_D}$. Fig.~\ref{fig:policy_improvement} further shows that ALOE continues to improve across RL iterations. On Assembly, where we additionally include RECAP, ALOE also outperforms this baseline that replaces AWR policy extraction with CFG.
Fig.~\ref{fig:throughput_generalization} demonstrates that ALOE shows stronger throughput, unseen-object generalization, and robustness to disturbances. We also perform ablation study on table~\ref{tab:critic_ablation} to deeper understand where ALOE's gains come from.
\vspace{-1mm}
\begin{table*}[t!]
\centering
\caption{\textbf{Where do ALOE's gains come from?} We isolate the critic design by removing action-level evaluation and off-policy TD bootstrapping. ALOE (full) uses TD with $Q^\pi(s, a)$; ALOE w.o action
level uses TD with all action masked during critic estimation; and ALOE w.o action level, w.o off-policy further replaces TD bootstrapping with MC returns $V^{\pi_D}(s)$ as RECAP. Under the same training recipe, we observe that full ALOE performs best across all real-world tasks.}
\label{tab:critic_ablation}
\footnotesize
\setlength{\tabcolsep}{4pt}
\renewcommand{\arraystretch}{0.95}
\begin{tabular}{lcccc}
\toprule
Variant (SR\%) & Phone & Laundry & Sorting & Overall \\
\midrule
ALOE (full) & \textbf{80.0 $\pm$ 5.2} & \textbf{58.7 $\pm$ 6.4} & \textbf{87.0 $\pm$ 4.3} & \textbf{75.2 $\pm$ 3.1} \\
w/o action-level & 65.0 $\pm$ 6.2 & 50.3 $\pm$ 6.5 & 80.0 $\pm$ 5.2 & 65.1 $\pm$ 3.4 \\
w/o action-level, w/o off-policy & 50.0 $\pm$ 6.5 & 45.0 $\pm$ 6.4 & 80.0 $\pm$ 5.2 & 58.3 $\pm$ 3.5 \\
\bottomrule
\end{tabular}
\end{table*}
\vspace{-1mm}
\subsection{Why Is the Learned Critic Reliable?}
Fig.~\ref{fig:critic_components} validates the two critic ingredients. Larger Q-chunks improve Phone-task throughput, and pessimistic aggregation outperforms mean aggregation on unseen objects, suggesting conservative scoring helps under distribution shift.
Fig.~\ref{fig:q_value_all} further evaluates action ranking under matched states, OOD calibration under meaningless actions, and the temporal localization of advantage along a trajectory.
Fig.~\ref{fig:adv_compare} shows that ALOE assigns high advantage to task-relevant approach-and-grasp actions, whereas AWR/RECAP can misattribute advantage to meaningless empty-grasp states.
Together, these diagnostics suggest that ALOE provides a more faithful action-level advantage estimate, rather than merely predicting coarse task progress.
% By contrast, the RECAP-style state-value baseline lacks explicit action conditioning, so its derived advantage can place large scores on meaningless empty-grasp actions.

% Table~\ref{tab:current_policy_corr} shows that ALOE has stronger correlation with realized returns on current-policy rollouts than the state-value AWR/RECAP baseline.

%% file: sec/conclusion.tex
\section{Conclusion}
We studied real-world post-training of large flow-based VLA policies under policy-mixed human-in-the-loop replay. ALOE learns action-level off-policy critics that evaluate the current policy's candidate actions. Across four real-world tasks, ALOE achieves the best final success rates and continues to improve over online RL iterations. Ablations and critic diagnostics show that these gains primarily come from action-level off-policy critic design.
\section{Limitations}
Even though ALOE consistently outperforms state-value baselines with the largest gains on precision-critical tasks. A current limitation is still the reliance on human supervision during real-world rollout. Integrating ALOE with automated reset pipelines is next step toward fully autonomous real-world VLA training.

%% file: sec/appendix.tex
%%%%%%%%%%%%%%%%%%%%%%%%%%%%%%%%%%%%%%%%%%%%%%%%%%%%%%%%%%%%
\section{Appendix}
This appendix provides reproducibility details and additional analysis, including the robotic platform, task specifications, reward definitions, human-in-the-loop data collection, training hyperparameters, evaluation protocols, supplementary quantitative results, and a formal interpretation of ALOE's policy improvement objective.
% ---------------------------------------------------------
\subsection{Robotic Platform}
\label{app:platform}
We describe the robotic platform and perception setup used for all tasks. Our experimental platform is built on two AgiBot robotic systems, which both consist of dual 7-DoF manipulator arms mounted on a mobile base with an adjustable waist mechanism. The platforms support modular end-effectors, enabling flexible configuration with either standard grippers or 6-DoF dexterous hands depending on the task requirements. The perception system includes multiple camera modalities: an RGB camera providing front-view coverage, along with RGB cameras mounted on each end-effector for close-up manipulation views. This multi-camera setup enables comprehensive visual observation from both global and local perspectives, which is essential for fine-grained manipulation tasks. For human-in-the-loop data collection, we employ VR-based teleoperation control. The operator uses a VR headset and controllers to command the robot. The VR controller hand gestures are mapped to end-effector translations and rotations, which are then converted to joint angle commands through inverse kinematics. The controller thumbsticks and buttons enable base and body movement control, while trigger buttons actuate the end-effectors. This teleoperation interface allows operators to provide real-time demonstrations, resets, and interventions during policy rollouts, enabling the collection of high-quality demonstration data and safety interventions when needed.

% ---------------------------------------------------------
\subsection{Post-training Process under Human-in-the-Loop Data Collection}
\label{app:hitl}

We describe our post-training process under a human-in-the-loop setting. For each task, we first collect several human demonstrations and use them to warm-start $\pi_{0.5}$ via imitation learning. At each RL iteration, we roll out $\pi_{0.5}$ while a human monitors execution. The human intervenes via teleoperation when unsafe behaviors are imminent, such as dropping or damaging the phone, which triggers early termination. If the policy fails or times out, the human resets the environment to the failure state and demonstrates corrective actions to complete the task. All trajectories, including both autonomous executions and human interventions, are stored in the replay buffer. The resulting dataset contains both successful and failed fragments, reflecting the heterogeneous nature of human-in-the-loop data collection. Data collection continues until a fixed number of successful trajectories is reached (e.g., 50 for the phone task). We then apply ALOE (Algorithm~\ref{alg:aloe_appendix}) to update the critic and actor before proceeding to the next RL iteration. Human intervention is frequent in early iterations but decreases as the policy improves.

\begin{algorithm}[h]
\caption{ALOE: Action-Level Off-Policy Evaluation Framework for VLA Policies}
\label{alg:aloe_appendix}
\textbf{Require:} initial VLA policy $\pi_\theta$; critic ensemble $\{Q^\pi_{\phi_i}\}_{i=1}^K$
with targets $\{Q^\pi_{\phi_i^-}\}_{i=1}^K$; replay buffer $\mathcal{D}$;
discount $\gamma$; hyperparameters $(\beta,\epsilon_{\mathrm{clip}},\omega)$; update steps $(N_Q,N_\pi)$.
\begin{algorithmic}[1]
\State Train $\pi_\theta$ by imitation learning on small demonstrations.
\For{$k = 1$ to $T$}
    \For{$i_{\mathrm{ep}} = 1$ to $N_{\mathrm{succ}}$}
        \State Deploy $\pi_\theta$ to autonomously execute the task under instruction $\ell$.
        \If{unsafe behavior or task failure is detected}
            \State Terminate early and label the segment as failure.
            \State Human teleoperates to recover the task.
        \EndIf
        \State Store all transitions (autonomous and human-corrected) into $\mathcal{D}$.
    \EndFor

    \For{$j = 1$ to $N_Q$}
        \State Sample minibatch $\mathcal{B}\subset\mathcal{D}$ and corresponding next actions from $\pi_\theta$.
        \State Compute TD target $y$ using Eq.~\eqref{eq:q_chunk_target}.
        \State Update critic ensemble by minimizing Eq.~\eqref{eq:critic_loss}.
        \State Update target networks $\phi_i^- \leftarrow \omega \phi_i + (1-\omega)\phi_i^-$.
    \EndFor

    \For{$j = 1$ to $N_\pi$}
        \State Sample minibatch $\mathcal{B}\subset\mathcal{D}$, and compute corresponding
        $Q^\pi_{\mathrm{pess}}(s_t, \mathbf{a}_{t:t+h}, \ell)$.
        \State Estimate $V(s, \ell)$ by sampling $a\sim\pi_\theta(\cdot \mid s, \ell)$
        and compute advantage.
        \State Compute weight $w$ using Eq.~\eqref{eq:aw_weight}.
        \State Update policy $\pi_\theta$ by minimizing Eq.~\eqref{eq:flow_actor}.
    \EndFor
\EndFor

\State \textbf{Return:} trained policy $\pi_\theta$.
\end{algorithmic}
\end{algorithm}

% ---------------------------------------------------------
\subsection{Training Details and Hyperparameters}
\label{app:training}
We summarize the actor architecture, critic design, optimization settings, and baseline implementations.

% ------------------------------------------------
\subsubsection{ALOE Actor}
We use the pretrained $\pi_{0.5}$ flow-matching VLA model~\cite{pi05} as the policy backbone, containing approximately 3B parameters. Policies are fine-tuned end-to-end with full parameters but vision encoders. At deployment, inference runs on a single RTX 4090 GPU. The control frequency is 30\,Hz, and manipulators operate under asynchronous control to reduce latency.

% ------------------------------------------------
\subsubsection{ALOE Critic}
We employ a Transformer-based multimodal critic to estimate chunked action-values. Visual observations are encoded using a pretrained SigLIP encoder, which remains frozen for stability. Proprioception and action chunks are projected into a shared embedding space of dimension $D=256$. An ensemble of $K$ Q-tokens is prepended to the input sequence as readout tokens, each mapped to a scalar Q-value via an independent MLP head. The input sequence is
\[
\mathbf{X} = [\mathbf{q}_1,\ldots,\mathbf{q}_K,\mathbf{s},\mathbf{a}_1,\ldots,\mathbf{a}_h,\mathbf{x}_1,\ldots,\mathbf{x}_V],
\]
where $h=50$ and $V=3$, giving sequence length $L=56$. The Transformer uses 6 layers and 8 attention heads. The critic is trained with Q-chunking TD targets (Eq.~\ref{eq:q_chunk_target}) and pessimistic aggregation. Unless otherwise specified, we use an ensemble size of $K=2$; in our component analysis, increasing the ensemble size to $K=4$ further improves OOD success rate under pessimistic aggregation. Target networks are updated via Polyak averaging with coefficient 0.005. Optimization uses AdamW with $\beta_1=0.9$, $\beta_2=0.95$, cosine learning-rate schedule (peak $3\times10^{-5}$, 100 warmup steps), gradient clipping at 1.0, discount factor $\gamma=0.99$, and batch size 256.

% ------------------------------------------------
\subsubsection{Baselines}
All baselines use the same $\pi_{0.5}$ backbone for fair comparison. For sorting and phone packing, training starts from an open-source checkpoint. For laundry folding, we warm-start with additional task data due to task difficulty.

\textbf{Behavior Cloning (BC).}
We follow~\cite{pi05} and minimize the flow-matching objective:
\begin{align}
\mathcal{L}_{\mathrm{BC}}
&=\mathbb{E}_{\mathcal{D}}
\big\|\boldsymbol{\epsilon}-\mathbf{a}_{t:t+h}
- f_\theta(\tilde{\mathbf{a}}_{t:t+h}, s_t, \ell)\big\|_2^2.
\end{align}

\textbf{DAgger.}
We implement human-in-the-loop DAgger~\cite{hgdagger,menda2019ensembledagger,ross2011dagger}. Each iteration aggregates new demonstrations and trains with the same objective as BC.

\textbf{AWR.}
We follow AWR~\cite{awr} and implement a distributional value critic~\cite{bellemare2017distributional} with $N=256$ bins. The value distribution is parameterized as
\[
p_\theta(V|s)=(p_0,\ldots,p_N),
\]
and the scalar value estimate is computed by taking the expectation over the discretized support:
\[
V(s)=\sum_i z_i \frac{e^{p_i}}{\sum_j e^{p_j}}.
\]
The targets are Monte Carlo returns discretized between $V_{\min}$ and $V_{\max}$. The critic shares the same architecture as the ALOE critic, except that it uses a single head ($K=1$). We use $V(s)$ to compute the advantage and perform AWR~\cite{awr} to extract the policy from the learned value function. This baseline provides a direct comparison with ALOE in terms of critic design.

\textbf{RECAP.}
We use the same critic design as in AWR. For the actor, however, we extract the policy using classifier-free guidance (CFG), following the original RECAP algorithm in $\pi_{0.6}$~\cite{recap}. Specifically, we label human-intervention samples, as well as non-intervention samples whose advantages rank in the top $10\%$, as positive samples, and treat all remaining samples as negative samples. We then append an \texttt{Advantage:true} or \texttt{Advantage:false} suffix to the prompt accordingly. During training, the action expert is trained to predict actions both with and without this condition, where the condition is dropped with probability $30\%$. At inference time, we set the label to \texttt{Advantage:true} and apply CFG with guidance scale $\beta=1$. This baseline provides a direct comparison between ALOE and RECAP.

\subsubsection{Optimization Settings}
Both ALOE and AWR critics/actors are continually fine-tuned from the previous iteration, following standard continual policy refinement practice in human-in-the-loop learning~\cite{wu2025robocopilothumanintheloopinteractiveimitation}. Policies are evaluated using their respective critics. We set the advantage-weighting temperature to $\beta=0.5$ for ALOE actor extraction. For sparse-reward training, the failure penalty $C_{\mathrm{fail}}$ is set to the average trajectory length of each task, so failures receive a penalty comparable to terminating after a typical unsuccessful rollout. In each online RL iteration, we perform $N_Q$ critic gradient steps and $N_\pi$ actor gradient steps; unless otherwise specified, we set $N_Q=10000$ and $N_\pi=5000$ across tasks.
\subsubsection{Data Collection Details}
During data collection, we train a new policy and proceed to the next iteration once each round accumulates 50 successful trajectories.
For the more challenging Phone Assembly tasks, we instead collect around 100 successful trajectories per round before retraining.
During rollout, if the policy makes a recoverable mistake, a human operator takes over from the failure point and completes the trajectory.
If the mistake is unrecoverable, the attempt is terminated and restarted.
We further visualize this data collection process on the Phone Assembly task.
Fig.~\ref{fig:data_pie} shows the mixture data distribution accumulated across iterations, including frames generated by different policies, and human intervention frames.
As training progresses, the proportion of human intervention frames decreases, while policy-generated frames occupy a larger share of the collected data.
Fig.~\ref{fig:data_curve} summarizes this trend more directly: the intervention frame ratio steadily drops across iterations, indicating that the policy requires less human takeover as iterative training proceeds.

\begin{figure}
    \centering
    \includegraphics[width=\linewidth]{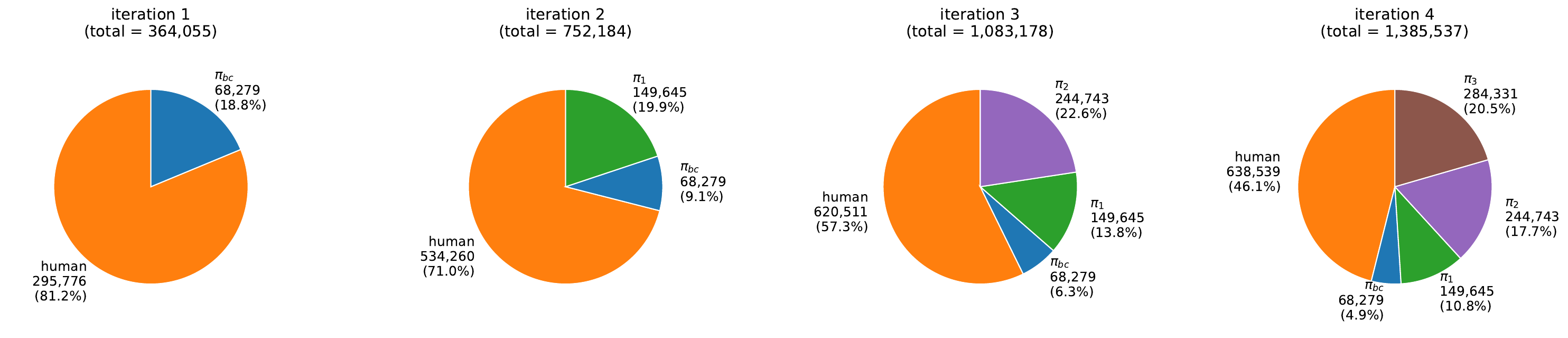}
    \caption{
        \textbf{Composition of accumulated frames across policy improvement iterations on Phone Assembly task.}
        As training progresses, the fraction of human intervention frames decreases, while policy-generated frames from successful and failed attempts increase, reflecting a gradual shift from human-assisted recovery toward autonomous data collection.
    }
    \label{fig:data_pie}
\end{figure}
\begin{figure}
    \centering
    \includegraphics[width=0.5\linewidth]{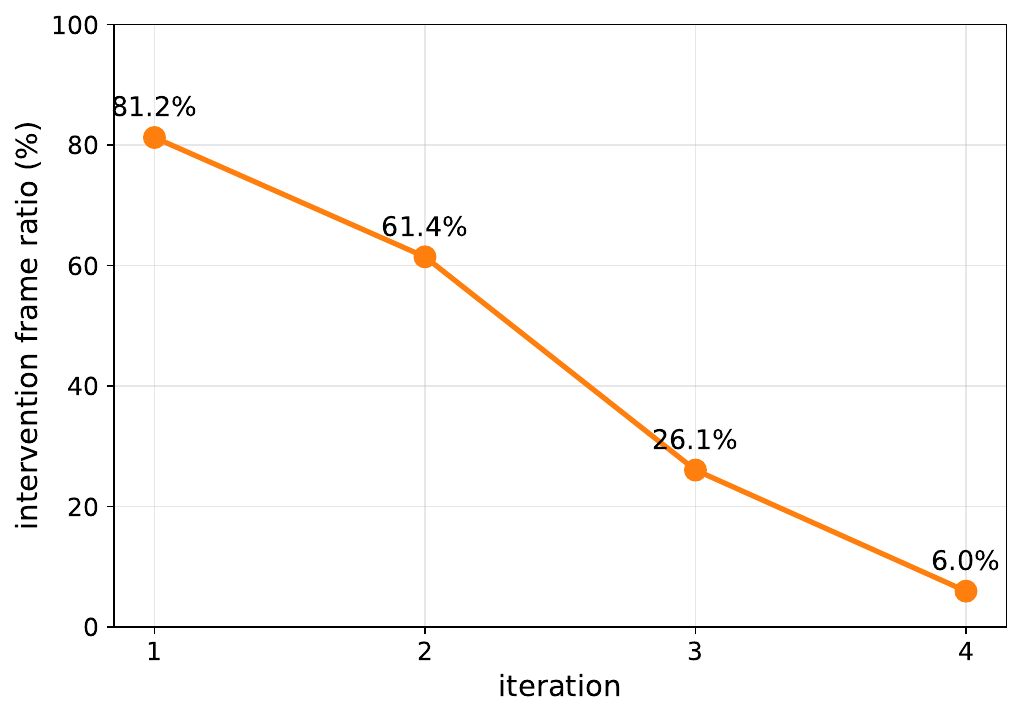}
    \caption{
        \textbf{Human intervention frame ratio across policy improvement iterations on Phone Assembly task.}
        The intervention ratio steadily decreases from 81.2\% to 6.0\%, indicating that the learned policy requires substantially less human takeover as iterative training proceeds.
    }
    \label{fig:data_curve}
\end{figure}

\subsection{Tasks and Environments}
\label{app:tasks}
\subsubsection{Pack Smart Phone}

Pack Smart Phone is a bimanual fine-manipulation task: pick the phone from the table, place it fully inside a rigid container (17.5cm $\times$ 8.6cm), position the container, and close the lid. The phone case (17.8cm $\times$ 8.8cm) has minimal clearance, making insertion and lid closure precision-critical. Success requires continuous pose alignment and recovery from positional variability during placement. We use joint angles and gripper pose as the action output.

\begin{table}[h]
\centering
\caption{Task parameters for Pack Smart Phone task.}
\label{tab:pack_phone_params}
\begin{tabular}{lp{0.5\columnwidth}}
\toprule
\textbf{Parameter} & \textbf{Value} \\
\midrule
Action space & 14-dimensional \\
Observation space & RGB from left/right wrist camera, RGB from head camera, 32 proprio \\
Prompt & Place the phone in the container, then cover it with the lid \\
Initial offline demonstrations & 50 \\
Max episode length & 2000 \\
Action chunk length & 50 \\
Execution chunk length & 25 \\
Reset method & Human reset \\
Randomization range & 3 cm in x and y \\
\bottomrule
\end{tabular}
\end{table}

The reward function is defined as follows:
\begin{itemize}
    \item \textbf{Success}: If the phone is completely placed in the container and the lid is successfully closed, the reward is 0.
    \item \textbf{Intermediate steps}: The intermediate process receives a reward of -1.
    \item \textbf{Failure cases}: The reward is $-C_{\text{fail}}$ if the phone accidentally drops, fails to accurately insert into the container, causes damage during insertion, or if the time step exceeds the max episode length.
\end{itemize}

\begin{figure}
    \begin{subfigure}[b]{0.3\linewidth}
    \centering
    \includegraphics[width=\linewidth]{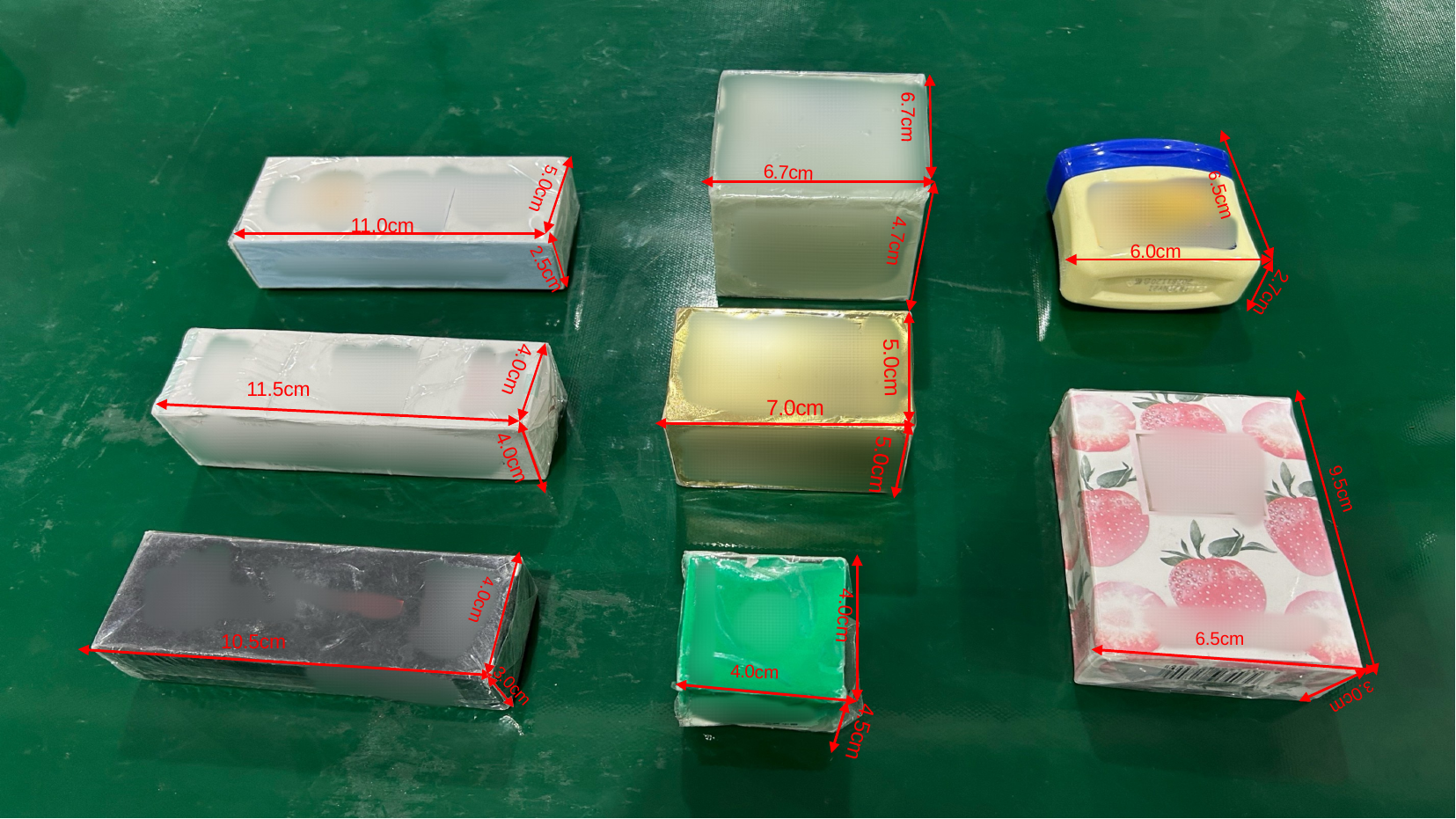}
    \caption{Unseen objects.}
    \label{fig:unseen_objects}
    \end{subfigure}
    \begin{subfigure}[b]{0.3\linewidth}
    \centering
    \includegraphics[width=\linewidth]{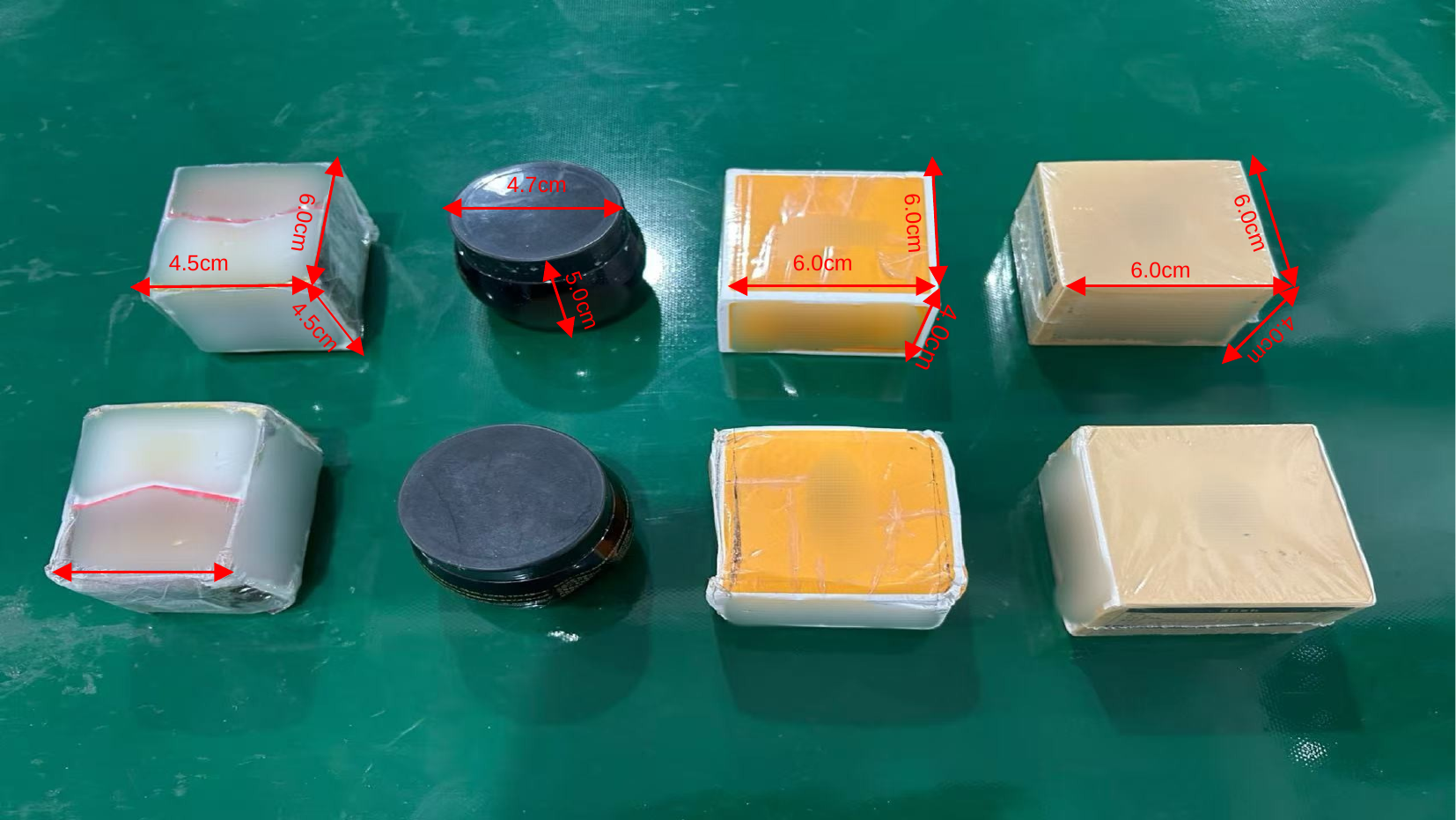}
    \caption{Seen objects.}
    \label{fig:seen_objects}
    \end{subfigure}
    \begin{subfigure}[b]{0.37\linewidth}
    \centering
    \includegraphics[width=\linewidth]{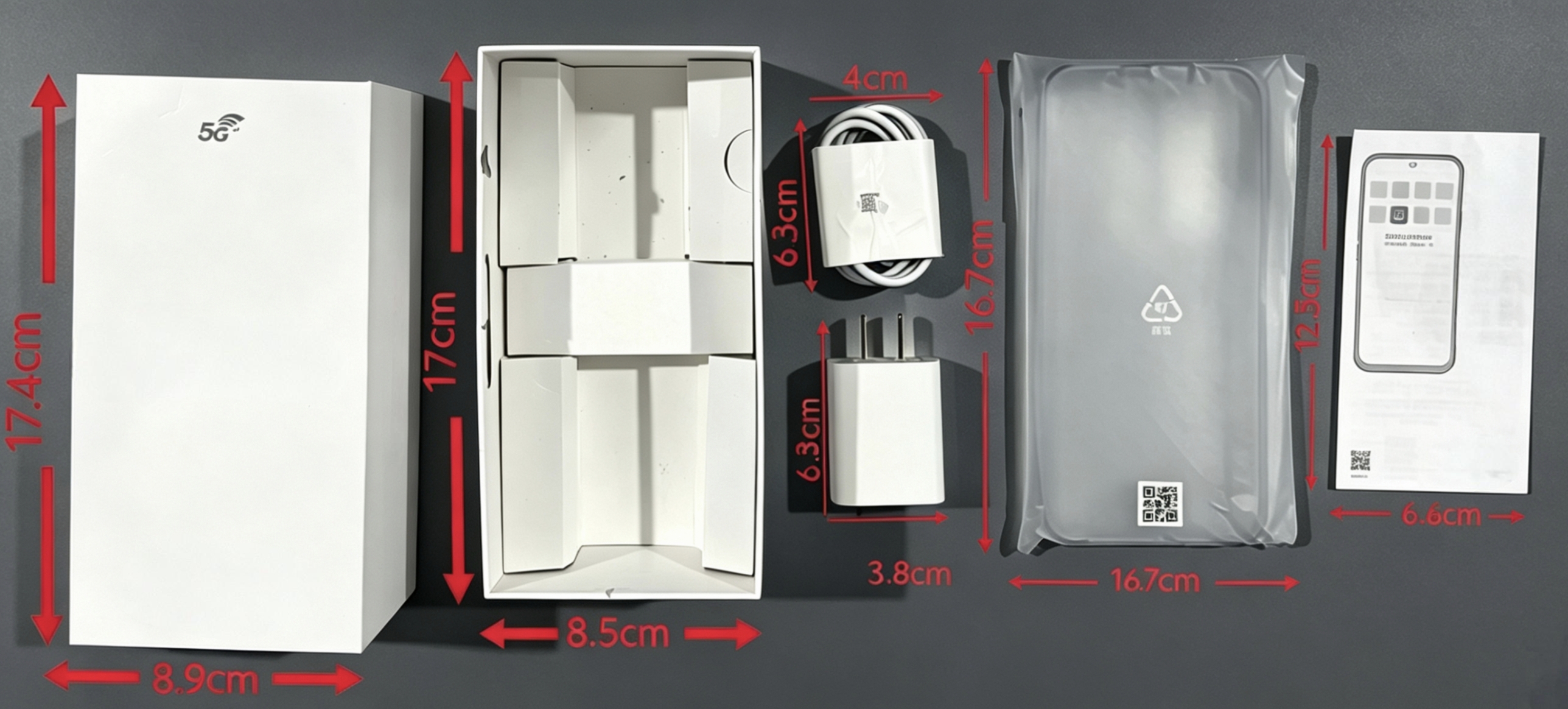}
    \caption{Phone accessories.}
    \label{fig:phone_objects}
    \end{subfigure}
    \caption{Objects used in our manipulation tasks.}

\end{figure}

\subsubsection{Folding Laundry}

Folding Laundry is a bimanual long-horizon task with deformable objects. The robot retrieves a garment from a rigid basket (35.0cm $\times$ 24.0cm), flattens it via coordinated bimanual grasping and spreading, then executes sequential folds to a target configuration. Garments vary in color (six types) and size, demanding robust visual perception and generalization of folding strategies. The high-dimensional configuration space and long episode duration necessitate recovery from entanglement and irregular states.

\begin{table}[ht]
\centering
\caption{Task parameters for Folding Laundry task.}
\label{tab:folding_laundry_params}
\begin{tabular}{lp{0.5\columnwidth}}
\toprule
\textbf{Parameter} & \textbf{Value} \\
\midrule
Action space & 14-dimensional \\
Observation space & RGB from left/right wrist camera, RGB from head camera, 32 proprio \\
Prompt & Fold the t-shirt \\
Initial offline demonstrations & 150 \\
Max episode length & 10000 \\
Action chunk length & 50 \\
Execution chunk length & 5 \\
Reset method & Human reset \\
Randomization range & Randomly place clothes in the box \\
\bottomrule
\end{tabular}
\end{table}

The reward function is defined as follows:
\begin{itemize}
    \item \textbf{Success}: If the clothes are neatly folded, the reward is 0.
    \item \textbf{Intermediate steps}: The intermediate process receives a reward of -1.
    \item \textbf{Failure cases}: The reward is $-C_{\text{fail}}$ if the gripper performs an empty grasp or grasps multiple layers of clothes, if the robot repeatedly shakes the clothes, if the robot begins folding without flattening the garment first, if the robot collides with the table, if the robot re-flattens the garment during the folding process, or if the time step exceeds the max episode length.
\end{itemize}

\subsubsection{Product Sorting}

Product Sorting is a bimanual pick-place task. The robot empties two material bins (left and right) by picking objects and placing them onto a front conveyor belt. Each arm is constrained to its own bin (left gripper from left bin, right from right); the task is complete when both bins are empty.

\begin{table}[h]
\centering
\caption{Task parameters for Product Sorting task.}
\label{tab:product_sorting}
\begin{tabular}{lp{0.5\columnwidth}}
\toprule
\textbf{Parameter} & \textbf{Value} \\
\midrule
Action space & 14-dimensional \\
Observation space & RGB from left/right wrist camera, RGB from head camera, 32 proprio \\
Prompt & Retrieve the objects from the front bin and place them onto the conveyor belt. \\
Initial offline demonstrations & 50 \\
Max episode length & 10000 \\
Action chunk length & 50 \\
Execution chunk length & 5 \\
Reset method & Human reset \\
Randomization range & Randomly place objects in the bins \\
\bottomrule
\end{tabular}
\end{table}

The reward function is defined as follows:
\begin{itemize}
    \item \textbf{Success}: If both material bins are successfully emptied, the reward is 0.
    \item \textbf{Intermediate steps}: The intermediate process receives a reward of -1.
    \item \textbf{Failure cases}: The reward is $-C_{\text{fail}}$ if any of the following situations occur: gripper positioning failure, empty gripper grasp, the gripper squeezing other objects during the grasping process, the robotic arm colliding with the material bin during movement, the gripper releasing the material in advance while the material is still in the air, collision occurring during the gripper placement process, or the time step exceeding the maximum episode length.
\end{itemize}

\subsubsection{Phone Assembly}

Phone Assembly is a bimanual, long-horizon manipulation task. The robot picks phone accessories, including a phone case, manual, charging cable, and power plug, from material trays and places them at designated locations. The task is complete when all accessories are correctly placed. The full operation takes around 1 min 20 s, and placing the power plug requires high manipulation precision, making this a challenging task scenario.

\begin{table}[h]
\centering
\caption{Task parameters for Phone Assembly task.}
\label{tab:phone_accessories_params}
\begin{tabular}{lp{0.5\columnwidth}}
\toprule
\textbf{Parameter} & \textbf{Value} \\
\midrule
Action space & 14-dimensional \\
Observation space & RGB from left/right wrist camera, RGB from head camera, 14 proprio. \\
Prompt & Place the cover and manual on the right side of the phone box. Then, place the charger plug and the cable inside the box. \\
Initial offline demonstrations & 150 \\
Max episode length & 10000 \\
Action chunk length & 50 \\
Execution chunk length & 5 \\
Randomization range & 1 cm in x and y \\
\bottomrule
\end{tabular}
\end{table}

The reward function is defined as follows:
\begin{itemize}
    \item \textbf{Success}: If four accessories are correctly placed, the reward is 0.
    \item \textbf{Intermediate steps}: The intermediate process receives a reward of -1.
    \item \textbf{Failure cases}: The reward is $-C_{\text{fail}}$ if any of the following situations occur: empty gripper grasp, object dropping during grasping or transfer, deviation between the final placement and the target location, or the time step exceeds the max episode length. Typical placement failures include the power adapter or charging cable not being fully placed into the designated slot, or the phone case being significantly misaligned with the phone box.
\end{itemize}

\subsection{Evaluation Protocol}
\label{app:eval_protocol}

We define success rate, throughput, unseen object evaluation, and robustness protocol.

\textbf{Success rate.} Success rate is the fraction of successful episodes over the total number of evaluation episodes across multiple evaluation rounds. An episode is considered successful when the task is completed according to the reward-defined success criteria; it is considered completed (for throughput) when it terminates due to success, failure, or timeout.

\textbf{Throughput.} Throughput is defined as completed episodes per hour and captures both execution speed and reliability under a fixed time budget. Throughput is measured as successful task completions per hour under successful executions; reset and scene setup time are excluded. This exclusion is applied consistently across all compared methods, so relative throughput comparisons remain valid (Fig.~\ref{fig:throughput_generalization}, left).

\textbf{Unseen object zero-shot generalization.} To evaluate generalization to novel objects, we replace the objects in the Product Sorting task with objects that were never seen during training (different shapes and colors). See Fig.~\ref{fig:seen_objects} and Fig.~\ref{fig:unseen_objects} for examples of training vs.\ unseen objects. The policy is evaluated without fine-tuning; this setting assesses the VLA's ability to generalize to unseen object appearances and geometries (Fig.~\ref{fig:throughput_generalization}, middle).

\textbf{Robustness evaluation.} During evaluation, we inject random perturbations to the garment being manipulated (e.g., in Folding Laundry) while the policy is executing. We then measure whether the VLA can re-adjust its actions and still complete the task successfully. This protocol evaluates recovery from unexpected disturbances (Fig.~\ref{fig:throughput_generalization}, right).

\subsection{Additional Results}
\label{app:additional_results}

Table~\ref{tab:bc_results} reports the performance of behavior cloning (BC) on the four tasks. All baselines start from the BC checkpoint before iterative RL.
\begin{table}[h]
\centering
\caption{BC performance on four tasks.}
\label{tab:bc_results}
\begin{tabular}{lcccc}
\toprule
 & Pack Phone & Folding Laundry & Product Sorting & Phone Assembly \\
\midrule
BC & 5.0 & 15.0 & 5.0 & 10.0 \\
\bottomrule
\end{tabular}
\end{table}

We also analyze the effect of actor updates by keeping the same action-level critic while changing only the policy extraction rule on table~\ref{tab:actor_side_ablation}. We find that success rate (SR) differences are modest, but ALOE extraction yields lower actor-loss variance and more stable optimization than the AWR-style alternative. \textbf{This suggests that our actor-side contribution mainly improves stability, while larger performance gains come from critic-side design.}
 We include an additional critic visualization to complement the quantitative diagnostics in Table~\ref{tab:current_policy_corr}, which shows that ALOE has stronger correlation with realized returns on current-policy rollouts than the state-value AWR/RECAP baseline.
\begin{table}[h]
\centering
\small
\caption{Actor-side ablation results.}
\label{tab:actor_side_ablation}

\begin{tabular}{lcccc}
\toprule
Actor-side ablation (SR\%) 
& Phone 
& Laundry 
& Sorting 
& Actor Loss std ($\downarrow$) \\
\midrule

ALOE policy extraction            
& 80.0 
& 58.7 
& 87.0 
& \textbf{0.004} \\

ALOE policy extraction w.o.\ clip  
& 75.0 
& 55.5 
& 85.0 
& 0.032 \\

\bottomrule
\end{tabular}
\end{table}

\begin{table*}[h]
\centering
\footnotesize
\setlength{\tabcolsep}{4pt}
\renewcommand{\arraystretch}{0.95}
\caption{\textbf{Critic-return correlation on current-policy rollouts.} We evaluate critic-return correlation on online rollouts from current policy. ALOE achieves higher correlation than AWR/RECAP during
VLA iterative online rollouts, indicating ALOE critic tracks current-policy action quality more reliably.}
\label{tab:current_policy_corr}
\begin{tabular}{lcc}
\toprule
$\pi_{\mathrm{curr}}$ Rollouts & Spearman ($\uparrow$) & Pearson ($\uparrow$) \\
\midrule
ALOE & \textbf{0.9283} & \textbf{0.9580} \\
AWR/RECAP & 0.9090 & 0.7640 \\
\bottomrule
\end{tabular}
\end{table*}

Fig.~\ref{fig:q_visualization_additional} provides another visualization of the learned $Q(s,a)$ along a trajectory.
\begin{figure}[t]
    \centering
    \includegraphics[width=\linewidth]{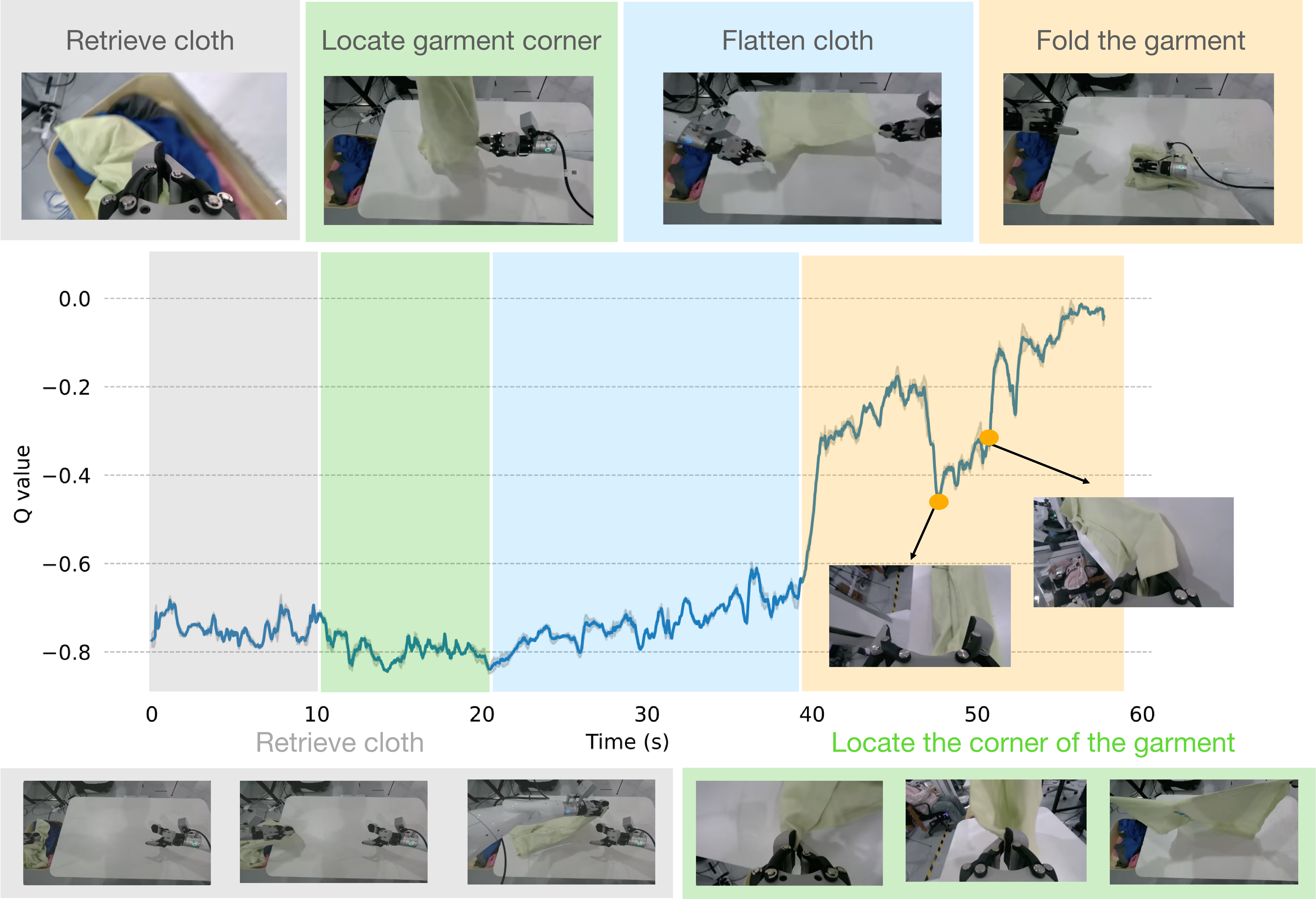}
    \caption{Additional $Q(s,a)$ visualization along a representative trajectory.}
    \label{fig:q_visualization_additional}
\end{figure}
\begin{table*}[t]
\centering
\footnotesize
\setlength{\tabcolsep}{4pt}
\renewcommand{\arraystretch}{0.95}
\caption{Final success rates on tasks with common baselines, corresponding to Fig.~\ref{fig:all_results}.}
\label{tab:final_success_common_appendix}
\begin{tabular}{lccc}
\toprule
Method & Pack Phone & Folding Laundry & Product Sorting \\
\midrule
DAgger & 43.3 $\pm$ 6.4 & 43.0 $\pm$ 1.0 & 82.0 $\pm$ 5.0 \\
AWR & 65.0 $\pm$ 6.2 & 52.2 $\pm$ 2.0 & 80.0 $\pm$ 5.2 \\
ALOE & \textbf{80.0 $\pm$ 5.1} & \textbf{58.7 $\pm$ 0.8} & \textbf{87.0 $\pm$ 4.3} \\
\bottomrule
\end{tabular}
\end{table*}

\begin{table}[t]
\centering
\footnotesize
\setlength{\tabcolsep}{5pt}
\renewcommand{\arraystretch}{0.95}
\caption{Final success rates on Phone Assembly, corresponding to Fig.~\ref{fig:all_results}.}
\label{tab:final_success_assembly_appendix}
\begin{tabular}{lc}
\toprule
Method & Phone Assembly SR (\%) \\
\midrule
DAgger & 55.0 $\pm$ 6.4 \\
AWR & 80.0 $\pm$ 5.2 \\
RECAP & 70.0 $\pm$ 5.9 \\
ALOE & \textbf{95.0 $\pm$ 2.8} \\
\bottomrule
\end{tabular}
\end{table}
\begin{table*}[t]
\centering
\footnotesize
\setlength{\tabcolsep}{4pt}
\renewcommand{\arraystretch}{0.95}
\caption{Policy improvement across online RL iterations, corresponding to Fig.~\ref{fig:policy_improvement}. Iteration 0 denotes the BC warm-up policy.}
\label{tab:policy_improvement_appendix}
\begin{tabular}{llp{0.58\linewidth}}
\toprule
Task & Method & Success rate over iterations (\%) \\
\midrule
Phone Packing & ALOE & 5.0, 10.0, 15.0, 45.0, 65.0, 80.0 \\
Phone Packing & DAgger & 5.0, 5.0, 5.0, 25.0, 55.0, 43.3 \\
Phone Packing & AWR & 5.0, 5.0, 5.0, 10.0, 40.0, 65.0 \\
\midrule
Product Sorting & ALOE & 5.0, 20.0, 40.0, 90.0 \\
Product Sorting & DAgger & 5.0, 5.0, 15.0, 80.0 \\
Product Sorting & AWR & 5.0, 5.0, 38.0, 75.0 \\
\midrule
Phone Assembly & ALOE & 10.0, 65.0, 75.0, 95.0 \\
Phone Assembly & RECAP & 10.0, 55.0, 50.0, 70.0 \\
Phone Assembly & AWR & 10.0, 60.0, 55.0, 80.0 \\
Phone Assembly & DAgger & 10.0, 20.0, 40.0, 55.0 \\
\bottomrule
\end{tabular}
\end{table*}
\begin{table*}[t]
\centering
\footnotesize
\setlength{\tabcolsep}{4pt}
\renewcommand{\arraystretch}{0.95}
\caption{Efficiency, zero-shot generalization, and robustness results corresponding to Fig.~\ref{fig:throughput_generalization}. Throughput is measured in successful task completions per hour; success-rate error bars denote binomial standard errors.}
\label{tab:efficiency_generalization_appendix}
\begin{tabular}{lccc}
\toprule
Method & Phone Throughput (tasks/hour) & Unseen Sorting SR (\%) & Laundry Robustness SR (\%) \\
\midrule
DAgger & 86.1 & 62.5 $\pm$ 6.2 & 88.5 $\pm$ 4.1 \\
AWR & 82.9 & 38.8 $\pm$ 6.3 & 84.6 $\pm$ 4.7 \\
ALOE & \textbf{92.5} & \textbf{68.8 $\pm$ 6.0} & \textbf{92.3 $\pm$ 3.4} \\
\bottomrule
\end{tabular}
\end{table*}
\begin{table}[t]
\centering
\footnotesize
\setlength{\tabcolsep}{5pt}
\renewcommand{\arraystretch}{0.95}
\caption{Q-chunking analysis corresponding to Fig.~\ref{fig:critic_components}. Throughput is measured in tasks per hour; success-rate error bars denote binomial standard errors.}
\label{tab:q_chunking_appendix}
\begin{tabular}{lcc}
\toprule
Variant & Throughput (tasks/hour) & Success Rate (\%) \\
\midrule
1-step & 52.9 & 70.0 $\pm$ 5.9 \\
TD-5 & 72.6 & 70.0 $\pm$ 5.9 \\
TD-15 & 75.2 & 80.0 $\pm$ 5.2 \\
TD-50 & \textbf{83.3} & \textbf{85.0 $\pm$ 4.6} \\
\bottomrule
\end{tabular}
\end{table}
\begin{table*}[t]
\centering
\footnotesize
\setlength{\tabcolsep}{5pt}
\renewcommand{\arraystretch}{0.95}
\caption{Pessimistic ensemble analysis corresponding to Fig.~\ref{fig:critic_components}. Error bars denote binomial standard errors.}
\label{tab:pessimistic_ensemble_appendix}
\begin{tabular}{lc}
\toprule
Variant & OOD Success Rate (\%) \\
\midrule
Mean-2 & 45.0 $\pm$ 6.4 \\
Min-1 & 60.0 $\pm$ 6.3 \\
Min-2 & 70.0 $\pm$ 5.9 \\
Min-4 & \textbf{75.0 $\pm$ 5.6} \\
\bottomrule
\end{tabular}
\end{table*}

Toward vision encoder stability as a sanity check, we also train the SigLIP vision encoder during critic learning.
Unfreezing SigLIP accelerates TD-loss convergence in our runs (critic loss 0.0021 vs.\ 0.0079 after 1k steps) without showing an unstable trend.
Because it substantially increases compute and memory cost, we keep SigLIP frozen in the main experiments. We also provide the numerical values corresponding to the bar plots in the main paper. Unless otherwise noted, error bars for success rates denote binomial standard errors over 60 evaluation trials. Tables~\ref{tab:final_success_common_appendix} and~\ref{tab:final_success_assembly_appendix} report the final-policy success rates shown in Fig.~\ref{fig:all_results}.

Table~\ref{tab:policy_improvement_appendix} reports the iteration-wise success rates shown in Fig.~\ref{fig:policy_improvement}.

Table~\ref{tab:efficiency_generalization_appendix} reports the efficiency, zero-shot generalization, and robustness values shown in Fig.~\ref{fig:throughput_generalization}.

Tables~\ref{tab:q_chunking_appendix} and~\ref{tab:pessimistic_ensemble_appendix} report the critic component analysis values shown in Fig.~\ref{fig:critic_components}.
\clearpage
\subsection{Unclipped Policy Improvement Interpretation}
\label{app:proof_main}
The following statement formalizes the connection between the unclipped advantage-weighted actor objective and the KL-constrained policy improvement problem.
\begin{theorem}[Constraint Policy Improvement]
\label{thm:support_appendix}
For a given state $s$, let $\pi_{\mathrm{ref}}$ be the implicit behavior policy representing the data distribution in $\mathcal{D}$. For the unclipped advantage-weighted objective, the optimal solution matches the analytical solution to the following KL-constrained optimization problem:
\begin{equation}
\begin{aligned}
\max_{\pi} \quad 
& \mathbb{E}_{\mathbf{a}_{t:t+h} \sim \pi(\cdot \mid s_t, \ell)} \big[ Q(s_t, \mathbf{a}_{t:t+h}, \ell) \big] \\
\text{s.t.} \quad
& D_{\mathrm{KL}}\!\left(
\pi(\cdot \mid s_t, \ell)\,\|\,\pi_{\mathrm{ref}}(\cdot \mid s_t, \ell)
\right) \le \epsilon.
\end{aligned}
\end{equation}
The KL radius is implicitly controlled by the Lagrange multiplier $\beta$.
\end{theorem}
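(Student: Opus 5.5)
We follow the standard AWR/AWAC derivation, in two steps. First we solve the KL-constrained problem in closed form. Then we show that the population maximizer of the unclipped weighted log-likelihood objective is exactly that closed form.

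\textbf{Step 1: solve the constrained problem.} We fix $(s_t,\ell)$ and treat $\pi(\cdot\mid s_t,\ell)$ as a density over action chunks that is absolutely continuous with respect to $\pi_{\mathrm{ref}}$; otherwise the KL is infinite. We form the Lagrangian
\begin{equation*}
\mathcal{L}(\pi,\beta,\alpha)=\mathbb{E}_{\pi}[Q]+\beta\bigl(\epsilon-D_{\mathrm{KL}}(\pi\,\|\,\pi_{\mathrm{ref}})\bigr)+\alpha\Bigl(1-\int\pi\Bigr),
\end{equation*}
with $\beta>0$ the multiplier on the KL constraint and $\alpha$ the multiplier for normalization. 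Setting the pointwise functional derivative to zero gives $Q-\beta(\log\pi-\log\pi_{\mathrm{ref}}+1)-\alpha=0$. Hence
\begin{equation*}
\pi^*(\mathbf{a}\mid s_t,\ell)=\frac{1}{Z(s_t)}\,\pi_{\mathrm{ref}}(\mathbf{a}\mid s_t,\ell)\exp\bigl(Q(s_t,\mathbf{a},\ell)/\beta\bigr),
\end{equation*}
where $Z$ absorbs $\alpha$ and normalizes. The problem is concave in $\pi$: the objective is linear and the KL term is convex. Slater's condition holds at $\pi=\pi_{\mathrm{ref}}$, which has KL $0<\epsilon$. So this stationary point is the global optimum.

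\textbf{Step 2: relate $\beta$ to the KL radius.} When the constraint is active, complementary slackness fixes $\beta$ as the value satisfying $D_{\mathrm{KL}}(\pi^*_\beta\|\pi_{\mathrm{ref}})=\epsilon$. This gives the claimed implicit correspondence between $\beta$ and $\epsilon$. As $\beta\to\infty$ the KL tends to $0$, and it increases as $\beta$ decreases.

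We also note the baseline. Replacing $Q$ by $A=Q-V$, with $V$ depending only on $s_t$, multiplies $\exp(Q/\beta)$ by the state-only factor $\exp(-V/\beta)$. This factor is absorbed into $Z(s_t)$, so $\pi^*$ is unchanged.

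\textbf{Step 3: identify $\pi^*$ with the actor optimum.} We project $\pi^*$ onto the policy class by minimizing $\mathbb{E}_s D_{\mathrm{KL}}(\pi^*\|\pi_\theta)$. Dropping terms independent of $\theta$, this equals maximizing
\begin{equation*}
\mathbb{E}_{\mathbf{a}\sim\pi_{\mathrm{ref}}}\Bigl[\tfrac{1}{Z(s)}\exp(A/\beta)\log\pi_\theta(\mathbf{a}\mid s,\ell)\Bigr].
\end{equation*}
Since $\mathbf{a}\sim\pi_{\mathrm{ref}}$ is precisely sampling from $\mathcal{D}$, this is $\mathcal{L}_{\mathrm{actor}}$ with unclipped weight $w=\exp(A/\beta)$, up to the per-state positive factor $1/Z(s)$.

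To justify dropping $1/Z(s)$, we argue per state. For fixed $s$ and a nonparametric policy, the maximizer of $\int w(\mathbf{a})\pi_{\mathrm{ref}}(\mathbf{a})\log\pi(\mathbf{a})\,d\mathbf{a}$ over normalized $\pi$ is $\pi\propto w\,\pi_{\mathrm{ref}}$, by Gibbs' inequality. This maximizer is invariant to positive state-dependent scaling, so it equals $\pi^*$ exactly.

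\textbf{Main obstacle.} The delicate point is this last identification. The actor loss is an expectation over states, and a per-state constant can matter when $\pi_\theta$ is a restricted parametric family. We will therefore state the result for an expressive (nonparametric) policy class, where the per-state argument applies, and add a remark that for parametric classes one obtains the projection onto $\pi^*$ up to the $Z(s)$ weighting. We will also assume $\exp(Q/\beta)$ is $\pi_{\mathrm{ref}}$-integrable so that $Z$ is finite; this holds for bounded rewards.
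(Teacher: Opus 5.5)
Your proposal is correct and follows the same route as the paper's proof. You derive $\pi^*\propto\pi_{\mathrm{ref}}\exp(Q/\beta)$ from the Lagrangian stationarity condition, absorb the state-only baseline $V$ into the normalizer, and identify this tilt with the maximizer of the unclipped weighted log-likelihood. Your additions justify steps the paper only asserts: convexity with Slater's condition for global optimality, the monotone $\beta$--$\epsilon$ correspondence, Gibbs' inequality for the weighted-likelihood maximizer, and the caveat that the per-state factor $Z(s)$ drops out only for a sufficiently expressive policy class.
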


\begin{proof}
    Fix a state $s$ (and omit $\ell$ for brevity). Consider the KL-constrained policy improvement problem:
    \begin{equation}
    \begin{aligned}
    \max_{\pi}\ & \int \pi(a)\, Q(a)\, da \\
    \text{s.t.}\ & \int \pi(a)\,\log\frac{\pi(a)}{\pi_{\mathrm{ref}}(a)}\,da \le \epsilon,\\
    & \int \pi(a)\,da = 1.
    \end{aligned}
    \end{equation}
    
    Form the Lagrangian
    \begin{equation}
    \begin{aligned}
    \mathcal{L}(\pi,\beta,\alpha)
    = {} & \int \pi(a)\,Q(a)\,da \\
    & - \beta \!\left(\int \pi(a)\log\frac{\pi(a)}{\pi_{\mathrm{ref}}(a)}da-\epsilon\right) \\
    & + \alpha\!\left(\int \pi(a)da-1\right).
    \end{aligned}
    \end{equation}
    
    Taking the functional derivative and setting it to zero:
    \begin{equation}
    \frac{\delta \mathcal{L}}{\delta \pi(a)}
    = Q(a)
    - \beta\!\left(\log\frac{\pi(a)}{\pi_{\mathrm{ref}}(a)}+1\right)
    + \alpha
    = 0,
    \end{equation}
    which yields
    \begin{equation}
    \pi^*(a)
    \propto
    \pi_{\mathrm{ref}}(a)
    \exp\!\left(\frac{Q(a)}{\beta}\right).
    \end{equation}
    
    Now consider the advantage-weighted objective
    \[
    \mathbb{E}_{a\sim\pi_{\mathrm{ref}}}
    \big[
    \exp(A^{\pi}(a)/\beta)\log\pi_\theta(a)
    \big].
    \]
    Since
    \[
    A^{\pi}(a)=Q(a)-V^{\pi}(s),
    \]
    and $V^{\pi}(s)$ is constant w.r.t.\ $a$, the weighting is proportional to $\exp(Q(a)/\beta)$. Therefore, the maximum-likelihood solution satisfies
    \begin{equation}
    \pi_\theta(a)
    \propto
    \pi_{\mathrm{ref}}(a)
    \exp\!\left(\frac{Q(a)}{\beta}\right),
    \end{equation}
    which matches the solution to the KL-constrained problem. Hence the actor loss implements the optimal constraint policy improvement.
    \end{proof}